\documentclass{article}
\usepackage{microtype}
\usepackage{graphicx}
\usepackage{subcaption}
\usepackage{booktabs}
\usepackage{hyperref}

\usepackage[preprint]{icml2026}

\usepackage{amsmath}
\usepackage{amssymb}
\usepackage{mathtools}
\usepackage{amsthm}
\usepackage[capitalize,noabbrev]{cleveref} 
\usepackage{dsfont}

\usepackage{algorithm} 

\makeatletter
\@ifundefined{algorithmic}{}{%

}
\makeatother

\usepackage{algpseudocode} 
\usepackage{etoolbox}
\AtBeginEnvironment{algorithmic}{\footnotesize} 
\AtBeginEnvironment{algorithm}{\footnotesize} 

\usepackage{cancel}

\usepackage{placeins} 

\theoremstyle{plain}
\newtheorem{theorem}{Theorem}[section]

\newtheorem{lemma}[theorem]{Lemma}

\theoremstyle{definition}

\usepackage{xcolor}

\newtheorem{remark}[theorem]{Remark}

\DeclareMathOperator*{\argmin}{arg\,min}
\newcommand\indep{\protect\mathpalette{\protect\independenT}{\perp}}
\def\independenT#1#2{\mathrel{\rlap{$#1#2$}\mkern2mu{#1#2}}}
\def\prob{\mathbb{P}} 
\def\P{\prob_{\mathrm{obs}}}
\def\E{\mathbb{E}} 
\def\L{\mathcal{L}}
\def\R{\mathbb{R}}
\newcommand{\ind}{\mathds{I}} 

\newcommand{\KL}{\mathrm{KL}}

\newcommand{\Var}{\mathrm{Var}}

\usepackage{pifont}
\newcommand{\cmark}{\textcolor{green!70!black}{\ding{51}}}  
\newcommand{\xmark}{\textcolor{red!80!black}{\ding{55}}}    

\usepackage[textsize=tiny]{todonotes}

\icmltitlerunning{Generalized Bayes for Causal Inference}

\begin{document}

\twocolumn[
  \icmltitle{Generalized Bayes for Causal Inference}



  \icmlsetsymbol{equal}{*}

  \begin{icmlauthorlist}
    \icmlauthor{Emil Javurek}{lmu,mcml}
    \icmlauthor{Dennis Frauen}{lmu,mcml}
    \icmlauthor{Yuxin Wang}{lmu,mcml}
    \icmlauthor{Stefan Feuerriegel}{lmu,mcml}
  \end{icmlauthorlist}

  \icmlaffiliation{lmu}{LMU Munich}
  \icmlaffiliation{mcml}{Munich Center for Machine Learning}

  \icmlcorrespondingauthor{Emil Javurek}{emil.javurek@lmu.de}

  \icmlkeywords{Causal Inference, Bayesian Inference, Uncertainty Quantification, Orthogonal Statistical Learning, Neyman-Orthogonal meta-learners}

  \vskip 0.3in
]



\printAffiliationsAndNotice{}  


\begin{abstract}
Uncertainty quantification is central to many applications of causal machine learning, yet principled Bayesian inference for causal effects remains challenging. Standard Bayesian approaches typically require specifying a probabilistic model for the data-generating process, including high-dimensional nuisance components such as propensity scores and outcome regressions. Standard posteriors are thus vulnerable to strong modeling choices, including complex prior elicitation. In this paper, we propose a \textit{generalized Bayesian framework for causal inference}. Our framework avoids explicit likelihood modeling; instead, we place priors directly on the causal estimands and update these using an identification-driven loss function, which yields generalized posteriors for causal effects. As a result, our framework turns existing loss-based causal estimators into estimators with full uncertainty quantification. Our framework is flexible and applicable to a broad range of causal estimands (e.g., ATE, CATE). Further, our framework can be applied on top of state-of-the-art causal machine learning pipelines (e.g., Neyman-orthogonal meta-learners). For Neyman-orthogonal losses, we show that the generalized posteriors converge to their oracle counterparts and remain robust to first-stage nuisance estimation error. With calibration, we thus obtain valid frequentist uncertainty even when nuisance estimators converge at slower-than-parametric rates. Empirically, we demonstrate that our proposed framework offers causal effect estimation with calibrated uncertainty across several causal inference settings. To the best of our knowledge, this is the first flexible framework for constructing generalized Bayesian posteriors for causal machine learning.
\end{abstract}



\section{Introduction}


Causal inference is widely used to assess treatment effects in order to compare interventions and inform decisions \cite{Feuerriegel.2024, banerjiClinicalAITools2023, heckmanMakingMostOut1997}. However, conclusions are typically not based on the point estimates of causal effects alone, but rather on considering uncertainty associated with estimating causal effects. For example, in medicine, a treatment that appears beneficial on average may still be clinically unacceptable if the uncertainty is large enough to imply a substantial probability of harm \cite{zampieriUsingBayesianMethods2021, kneibRageMeanReview2023}. As a result, many applications of causal inference, require not only estimates of causal effects but also rigorous \textit{uncertainty quantification}.


Bayesian inference provides a natural framework for uncertainty quantification by computing a posterior distribution over the quantity of interest. However, in causal inference, standard Bayesian approaches are subject to fundamental \textit{challenges}: existing approaches typically require specifying a probabilistic model of the observational data-generating process, including nuisance components such as propensity scores and outcome regressions (see Section~\ref{sec:related_work} for a detailed overview). This is cumbersome and the resulting posterior is fragile to the modeling choices. Moreover, a prior must be placed on the nuisance components, which can interact with the likelihood in unintended ways, such as introducing regularization-induced confounding \cite{hahnRegularizationConfoundingLinear2016}.


\begin{figure*}
\centering
\includegraphics[width=0.9\textwidth]{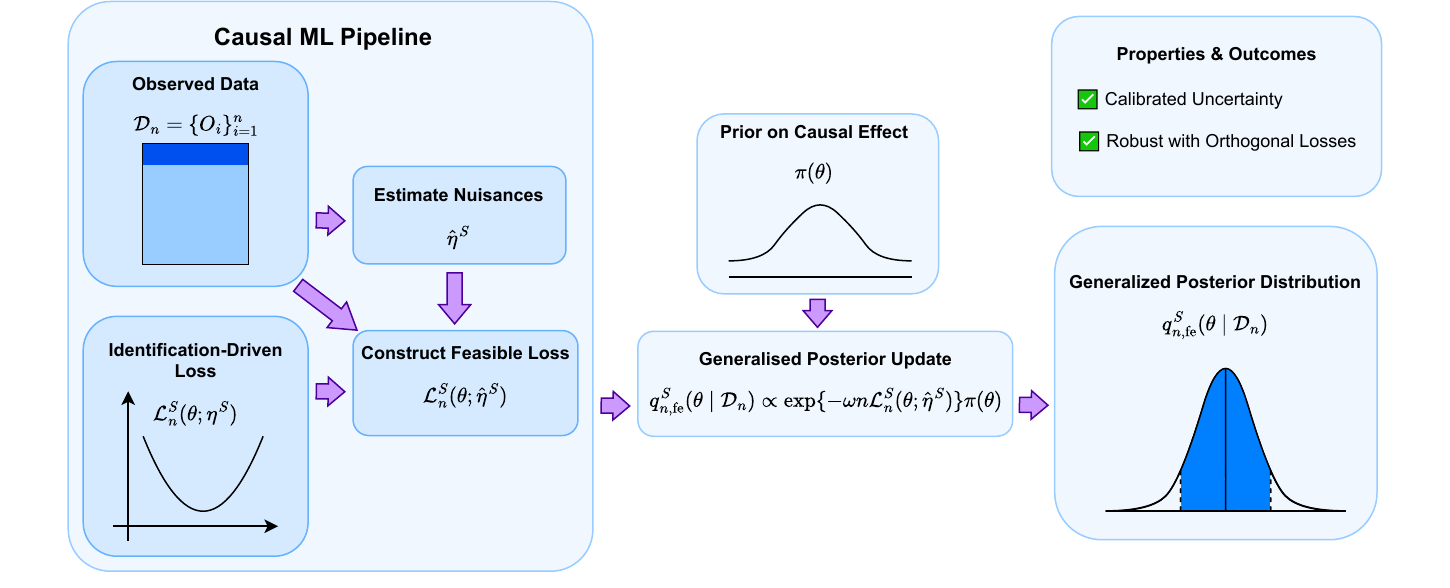}
\captionsetup{width=\textwidth} 
\caption{
\scriptsize{
\textbf{Overview of the pipeline for generalized Bayesian inference:} In the standard causal ML pipeline, one observes the data $\mathcal{D}_n$, chooses an identification-driven loss, estimates the corresponding nuisances, and thus constructs a fitting objective. Here, we further elicit a prior on the causal effect and together construct a generalized updating rule yielding the generalized posterior distribution for the causal effect of interest. This approach can be calibrated to yield frequentist-valid uncertainty and be made robust to nuisance estimation error via Neyman-orthogonal losses. 
}
}
\label{fig:overview}
\vspace{-0.5cm}
\end{figure*}

In this paper, we propose a \textit{\textbf{generalized Bayesian framework for causal inference}} that avoids explicit likelihood modeling.\footnote{By ``generalized'', we mean Bayesian updating based on a loss function rather than a likelihood, following the generalized (or Gibbs) posterior framework of \citet{Bissiri.2016}. Standard Bayesian inference is recovered as a special case when the loss is the negative log-likelihood.} Conceptually, we follow recent works on generalized Bayes \citep{JeremiasKnoblauch.2022} and view Bayesian inference \textit{not} just through the Bayes rule formula, but frame it as a principled belief-updating procedure that naturally extends to loss-based formulations more compatible with modern machine learning (ML). In our framework, we thus avoid specifying a probabilistic model of the full observational data-generating process; instead, we (i)~place priors directly on causal estimands and (ii)~update these using identification-driven loss functions for which the minimizers recover the targeted causal effect. This construction yields generalized posteriors for causal effects, without requiring priors or likelihoods for high-dimensional nuisance components. This addresses key challenges of likelihood-based Bayesian approaches, which can be sensitive to nuisance component modeling misspecification. Consequently, our approach brings the following benefits:

$\bullet$\,\textbf{Flexibility.} Our framework is applicable to a wide range of causal estimands. Throughout our paper, we illustrate our framework using three canonical estimands from the causal inference literature: (i)~the average treatment effect (ATE) and (ii)~the average treatment effect (CATE) settings. The proposed framework extends naturally to any other identified causal estimand that can be estimated with a loss. Moreover, our framework can be applied on top of state-of-the-art causal ML pipelines, in particular Neyman-orthogonal meta-learners such as the DR-learner \citep{Kennedy.30042020, vanderLaan.2011}. Altogether, our framework provides a general recipe for transforming existing loss-based causal estimators into estimators with full uncertainty quantification.

$\bullet$\,\textbf{Theoretical guarantees.} For Neyman-orthogonal losses, we show that the generalized posteriors have several favorable properties. In particular, the generalized posterior converges to the oracle counterpart and remains robust to first-stage nuisance estimation error (analogous to state-of-the-art Neyman-orthogonal meta-learners for point estimates), even when nuisance estimators converge at slower-than-parametric rates. It follows that the estimated generalized posterior also satisfies a Bernstein-von Mises limit, yielding asymptotically valid frequentist uncertainty quantification. This establishes how orthogonal statistical learning applies to generalized Bayesian causal inference.


$\bullet$\,\textbf{Practical advantages.} \textit{First}, our framework enables the placement of priors directly on the causal estimand of interest, which allows prior information to be specified transparently and avoids the need to encode prior beliefs indirectly through high-dimensional nuisance models, where their effect on the causal estimand is difficult to interpret or control. \textit{Second}, this loss-based formulation makes it possible to characterize and control how nuisance estimation error propagates into posterior uncertainty through properties of the loss such as Neyman-orthogonality. \textit{Third}, the resulting generalized posteriors can be calibrated to match frequentist uncertainty and thus achieve coverage.


Our main \textbf{contributions} are:\footnote{\raggedright Code is available at \url{https://github.com/EmilJavurek/Generalised-Bayes-for-Causal-Inference}.}
\textbf{(1)}~We introduce a generalized Bayesian framework for causal inference, which is applicable to state-of-the-art causal ML pipelines.  \textbf{(2)}~We prove that, for Neyman-orthogonal losses, the resulting generalized posterior is robust to first-stage nuisance estimation error and thus offers explicit convergence guarantees for the posterior. \textbf{(3)}~We empirically demonstrate that our framework yields frequentist-calibrated uncertainty quantification across several causal inference settings. Importantly, the goal of our experiments is \textit{not} to approximate a likelihood-based Bayesian posterior, but to show that our framework yields loss-based \textit{generalized} posteriors with valid frequentist uncertainty. To the best of our knowledge, ours is the first framework that constructs generalized Bayesian posteriors for causal inference using identification-driven losses with formal robustness guarantees wrt. the nuisance estimation error.

\section{Related Work}
\label{sec:related_work}

\textbf{Causal ML:} Recent work combines causal inference with machine learning to develop flexible methods for estimating a wide range of causal estimands, including the ATE \cite{Chernozhukov.2018,Kennedy.12032022, DanielRubin.2006} and CATE \cite{Foster.1252019, Nie.13122020, Kennedy.30042020}. 
State-of-the-art methods are typically \textit{Neyman-orthogonal meta-learners} (e.g., the DR-learner \citep{vanderlaanStatisticalInferenceVariable2006,Kennedy.30042020}, R-learner \citep{Nie.13122020}), which use flexible machine learning models to estimate nuisance components and then target the causal effect in the second stage with a Neyman-orthogonal loss. Neyman-orthogonality ensures robustness to misspecification in the nuisance components \citep{Foster.1252019, Chernozhukov.2018, Kennedy.30042020}.

\textbf{Generalized Bayesian inference:} Generalized Bayesian inference constructs posterior distributions by updating a prior using a loss function, rather than a likelihood. Conceptually, we may understand this generalization through the role the likelihood/losses play in the belief updating procedure: both simply quantify the discrepancy between a candidate parameter value and the observed data. \citet{Bissiri.2016} showed that this loss-based update yields a principled posterior under a decision-theoretic formulation, where standard Bayesian inference is recovered as a special case when the loss is the negative log-likelihood\footnote{Or, in other words, the loss taken is that for maximum likelihood estimation (MLE).}. Related generalization ideas appear in econometrics under the terms of quasi-/Laplace posterior \citep{chernozhukovMCMCApproachClassical2003}, and more recent optimization-centric formulations \citep{JeremiasKnoblauch.2022} give even broader formalizations of the same concept.
\citet{millerAsymptoticNormalityConcentration2021} provide general sufficient conditions for concentration, asymptotic normality (Bernstein--von Mises), and frequentist coverage of generalized posteriors, while \citet{syringCalibratingGeneralPosterior2019} develop a calibration procedure to achieve nominal frequentist coverage of the generalized posterior' credible sets. Further works have extended the theory of generalized Bayes \citep{matsubaraRobustGeneralisedBayesian2022, Knoblauch.12102019, Knoblauch., Gao.5242023}; \textit{however, to the best of our knowledge, no existing work has explored the extensions of generalized Bayes theory to causal inference - specifically to the question of employing losses depending on estimated nuisances.}

\textbf{Likelihood-based Bayesian causal inference: }
A large body of Bayesian causal inference proceeds by specifying a probabilistic model for (parts of) the observed data distribution and performing posterior inference under this model. Many approaches leverage flexible Bayesian nonparametric priors over high-dimensional nuisance components \citep[e.g.,][]{Antonelli.5132018, Ray.2020, Munoz.2011}; see \citet{Linero.1162021} for an overview. \textit{However, such approaches require strong modeling and prior assumptions that are often difficult to obtain through domain expertise. Further, the nuisance components are typically high-dimensional, so regularization induced by Bayesian priors can inadvertently bias causal effect estimates, known as regularization-induced confounding \citep{hahnRegularizationConfoundingLinear2016}. Hence, likelihood-based Bayesian causal inference can be very fragile.}

In parallel, Bayesian approaches have been developed for specific settings but tied to specific model classes \citep[e.g.,][]{zhangTreatmentEffectEstimation2021,Wang.6162021,Goh.6132020, chernozhukovMCMCApproachClassical2003,zhangTreatmentEffectEstimation2021,alaaBayesianInferenceIndividualized2017}. These methods typically embed causal structure into a chosen probabilistic model (e.g., multi-task Gaussian process models for CATE estimation, which impose Gaussian process priors;  \citet{alaaBayesianInferenceIndividualized2017}). \textit{However, such approaches are inherently model-specific and depend on strong structural assumptions about the data-generating process.  As a result, they do not offer a general framework for uncertainty quantification across causal estimands or learning pipelines, and their performance and uncertainty statements are closely tied to the correctness of the chosen model class. Our work is complementary in scope: rather than proposing another model-specific Bayesian causal method, we develop a general loss-based framework that can be applied on top of existing causal estimators and learning pipelines.}

A key challenge for likelihood-based Bayesian causal inference is robustness to nuisance estimation error. In particular, likelihood-based approaches can suffer from a feedback loop in which outcome posterior information influences nuisance posteriors, thereby undermining robustness \cite{stephensCausalInferenceMisspecification2022, Saarela.2016}. Existing Bayesian attempts to address this issue typically rely on carefully constructed, estimand-specific procedure, often restricted to the back-door ATE \citep{Orihara.652025,Breunig.11292022, sertBayesianSemiparametricCausal2025, Babasaki.9102024}. \textit{Yet, this stream does not provide a general framework for uncertainty quantification that preserves orthogonality.} In Section~\ref{sec:standard-bayes-hard}, we elaborate on all the difficulties associated with standard likelihood-based Bayesian approaches to causal inference.

\textbf{Research gap:} Overall, existing Bayesian causal methods largely rely on model- or estimand-specific constructions and often lack robustness to nuisance misspecification, which limits their applicability as a general framework for uncertainty quantification. A unified framework across estimands and models for (generalized) Bayesian causal inference is still missing.

\vspace{-0.3cm}
\section{Problem Setup}
\vspace{-0.1cm}

\textbf{Notation:} We denote random variables by capital letters $X,A,Y,Z,O$ and their realizations by small letters $x,a,y,z,o$ from domains $\mathcal{X},\mathcal{A},\mathcal{Y},\mathcal{Z},\mathcal{O}$. Let $\mathbb{P}(O)$ denote a distribution of some random variable $O$, and let $p(O=o)$ be a corresponding density or probability mass function. Let $\mathcal{P}(\mathcal{O})$ denote the set of all probability distributions over $\mathcal{O}$. We write $\E_{\prob}$ to denote expectations with respect to the distribution $\prob$ and $\E_n$ to denote the corresponding empirical mean over an observed sample of size $n$ from $\prob$. We use the potential outcomes framework \citep{Rubin.1974}

\textbf{Setting:} We consider a \textit{causal inference setting} given by a tuple $\mathcal{C} = \bigl(O, \mathcal{P}_\mathrm{obs} \times \mathcal{P}_\mathrm{int}, Q \bigr)$, where $O$ collects observed variables containing at least the treatment $A$ and the outcome $Y$;
$(\prob_\mathrm{obs},\prob_\mathrm{int})\in\mathcal{P}_\mathrm{obs}\times\mathcal{P}_\mathrm{int}$ are paired observational/interventional distributions over $O$ corresponding to an intervention on $A$; and $Q(\prob_\mathrm{int}) \in \Theta$ is the causal estimand of interest.

The \textbf{causal estimand} $Q(\prob_\mathrm{int})$ is said to be \textit{identifiable} if, under identification assumptions specific to the setting $\mathcal{C}$, there exists a measurable functional $\Bar{Q}$ such that $Q(\prob_\mathrm{int})$ = $\Bar{Q}(\prob_\mathrm{obs}) \equiv \theta$. Given an observed dataset $\mathcal{D}_n = \{O_i\}_{i=1}^n \sim \prob_\mathrm{obs}$, we estimate the identified causal estimand via $\theta \approx \hat{\theta}(\mathcal{D}_n)$.

Our framework applies to a broad class of causal estimands that fall under the above causal inference setting; to make the exposition concrete, we introduce a set of running examples below, which we use throughout the paper to illustrate the construction, derivations, and theoretical results.

$\blacksquare$\,\textbf{Example 1 (back-door ATE).}
Consider $O=(X,A,Y)\sim\prob_\mathrm{obs}$ where $X$ are covariates, $A\in\{0,1\}$ is a binary treatment, and $Y$ is an outcome. Let $Y(a)$ denote the potential outcome under treatment $A=a$ and write $\prob_\mathrm{int}$ for the interventional distribution induced by setting $A \leftarrow a$. The \textit{average treatment effect} is given by $\theta_{\mathrm{ATE}} \;=\; \E\bigl[Y(1)-Y(0)\bigr]$. 
We impose standard identification assumptions \citep{Robins.2000}: 
(i) \emph{consistency} $Y=Y(A)$,
(ii) \emph{positivity} $\prob_\mathrm{obs}(A=a\mid X=x)>0$ a.s. for $a\in\{0,1\}$, and
(iii) \emph{unconfoundedness} $(Y(0),Y(1))\indep A\mid X$.
Under these conditions, the target is identified as functionals of $\prob_\mathrm{obs}$. Writing $m_a(x)=\E_{\prob_\mathrm{obs}}[Y\mid A=a,X=x]$, we have $\theta_{\mathrm{ATE}} = \E_{\prob_\mathrm{obs}}\bigl[m_1(X)-m_0(X)\bigr]$.

$\blacksquare$\,\textbf{Example 2 (back-door CATE).}
Considering the same observations as in Example~1. Here, the causal estimand is the \textit{conditional average treatment effect} given by $\theta_{\mathrm{CATE}}(x) \;=\;\E\bigl[Y(1)-Y(0)\mid X=x \bigr]$. Under the same identification assumptions as above, the CATE is identified $\theta_{\mathrm{CATE}}(x) = m_1(x)-m_0(x)$.

\textbf{Estimation:} Given an observed dataset $\mathcal{D}_n = \{O_i\}_{i=1}^n \sim \prob_\mathrm{obs}$ from setting $\mathcal{C}$, we aim to estimate the identified causal estimand $\theta$ via an estimator $\hat{\theta}(\mathcal{D}_n)$. For a given estimand $\theta$, there are typically multiple valid estimation strategies; here, we focus on a broad class of M-estimation methods. Formally, a strategy $S \in \mathcal{S}$ obtains a point-estimator $\hat{\theta}^{S}$ by a loss minimization problem
\begin{align}
    \hat{\theta}^{S} \in 
    \argmin_{\theta \in \Theta} \L^{S}_n (\theta)
    \equiv \argmin_{\theta \in \Theta} \E_n\bigl[\ell^{S}(O;\theta) \bigr],
\end{align}

where $\L^{S}_{n}(\theta)$ is the empirical loss (using $\mathcal{D}_n$), and where $\ell^{S}(O; \theta)$ is the per-observation loss. 

\textbf{Nuisances:} In causal inference, estimation strategies $S$ often depend on additional \emph{nuisances} $\eta^S=\eta^S(\prob_\mathrm{obs})$. These nuisances (e.g., propensity scores, outcome regressions or conditional densities) are not themselves the target causal estimand, but are used as part of the estimation procedure. Accordingly, we define an estimation strategy $S$ to include both the nuisances $\eta^S$ and an estimator $\hat\eta^S=\hat\eta^S(\mathcal{D}_n)$ for the nuisances. Correspondingly, we write $\ell^{S}(O;\theta,\hat\eta^{S})$ for the nuisance dependent per-observation loss. At the population level, losses are typically motivated by an \emph{oracle} score or pseudo-outcome constructed using the true nuisance $\eta^S$. In practice, replacing $\eta^S$ by $\hat\eta^S$ is unavoidable and induces \emph{nuisance estimation error}, which can dominate the error of $\hat\theta^S$ in complex/high-dimensional settings.
A central goal of our later theory will be to characterize how such nuisance error propagates downstream into the generalized posterior.

$\blacksquare$\,\textbf{Example 1 (back-door ATE)} Scalar estimands such as the ATE often admit closed-form minimizers, such as $\hat{\theta}^{S} = \tfrac{1}{n}\sum_{i=1}^{n} \hat{Y}^{S}(O_i)$ for $S \in \{\mathrm{RA},\mathrm{IPW},\mathrm{AIPW}\}$.\footnote{For the ATE, the pseudo-outcomes $\hat Y^{\mathrm{AIPW}}$ from the $\mathrm{AIPW}$ strategy directly correspond to the $\hat Y^{\mathrm{DR}}$ from CATE estimation.} With finite-dimensional problems where the estimator admits a closed-form expression, the estimation strategy can nevertheless be represented as the minimizer of an appropriate loss, provided the loss is from a Fisher-consistent family of losses (e.g., squared loss for mean-type estimands, absolute losses for medians, or pinball losses for quantiles) \citep{gneitingStrictlyProperScoring2007}. Since the ATE is a mean, we may thus define a loss $\ell^{S}(O_i;\theta,\hat \eta^{S}) = (\hat Y^{S}(O_i;\hat \eta^{S}) - \theta)^2$ for any of the above strategies $S$.

$\blacksquare$\,\textbf{Example 2 (back-door CATE).} A common approach for CATE estimation is to construct a pseudo-outcome $\hat{Y}^{S}(O;\eta^{S})$ satisfying $\E_{\prob_\mathrm{obs}}[\hat{Y}^{S}(O;\eta^{S}) \mid X = x] = \theta_\mathrm{CATE}(x)$ and fit $\hat\theta(\cdot)$ by regression using the squared error loss $\ell^{S}(O;\theta,\hat\eta^{S}) = \bigl(\hat{Y}^{S}(O;\hat\eta^{S}) - \theta(X)\bigr)^2$.
Typical choices include $S \in \{\mathrm{RA,IPW,DR}\} $:
\footnotesize
\begin{align}
\hat{Y}^{\mathrm{RA}}
&=
\hat m_1(X)-\hat m_0(X),
\\
\hat{Y}^{\mathrm{IPW}}
&=
\frac{A\,Y}{\hat e(X)}-\frac{(1-A)\,Y}{1-\hat e(X)}, \\
\hat{Y}^{\mathrm{DR}}
&=
\Bigl(\frac{A}{\hat e(X)}-\frac{1-A}{1-\hat e(X)}\Bigr)\bigl(Y-\hat m_A(X)\bigr) \\
&\;+\;\hat m_1(X)-\hat m_0(X), 
\nonumber
\end{align}
\normalsize
with the propensity score $e(x)=\prob(A=1\mid X=x)$ and outcome regression $m_a(x)=\E[Y\mid A=a,X=x]$ as nuisances.


\section{Why standard Bayesian approaches for causal inference are difficult}
\label{sec:standard-bayes-hard}

The na{\"i}ve approach to Bayesian inference of a causal query $\theta$ (in a fixed causal setting $\mathcal{C}$) would be to simply follow the classical Bayes rule $q(\theta | \mathcal{D}_n) \propto \prob_\mathrm{obs}(\mathcal{D}_n|\theta)\pi(\theta).$\footnote{By $\propto$ we denote proportionality up to a multiplicative constant.} 
While specifying a prior $\pi(\theta)$ directly on $\theta \in \Theta$ does not pose any issues---and is in fact desirable for encoding belief into prior---the likelihood $\prob_\mathrm{obs}(\mathcal{D}_n|\theta)$ is problematic. In causal settings, $\theta$ typically does not fully determine the observed data distribution $\prob_\mathrm{obs}$, so $\prob_\mathrm{obs}(\mathcal{D}_n|\theta)$ is not directly defined without further modeling of the data-generating process. For example:

$\blacksquare$\,\textbf{Example 1 (back-door ATE).}
Many different distributions $\P$ of $(X,A,Y)$ induce the \textit{same} ATE $\theta_{\mathrm{ATE}}$. As a result, $\theta_{\mathrm{ATE}}$ does \textit{not} uniquely determine a value of the likelihood for $(X,A,Y)$, i.e. $\P(\mathcal{D}_n \mid \theta_{\mathrm{ATE}})$ is \textit{not} well-defined.

Instead, a proper likelihood is defined as $p(\mathcal{D}_n \mid \prob_\mathrm{obs})$, with a corresponding prior $\pi(\prob_\mathrm{obs})$ over the space of data-generating distributions. This induces the posterior distribution 
\begin{align}
    q(\theta \mid \mathcal{D}_n) = \int \delta_{\theta}(\bar Q(\prob_\mathrm{obs})) p(\prob_\mathrm{obs} \mid \mathcal{D}_n) \;\rm d \prob_\mathrm{obs} ,
\end{align} 
which is a marginalization of the joint posterior $p(\prob_\mathrm{obs} | \mathcal{D}_n) \propto \pi(\prob_\mathrm{obs}) p(\mathcal{D}|\prob_\mathrm{obs})$, and where $\delta_{\theta}(\bar Q(\prob_\mathrm{obs}))$ denotes a Dirac delta of $\bar Q(\prob_\mathrm{obs})$ at $\theta$. 

To make the prior distribution and likelihood at the level of $\prob_\mathrm{obs}$ tractable, we restrict the space of possible data-generating distributions to a (semi-) parameterized subset $\mathcal{Q}_\mathrm{obs} \subset \mathcal{P}_\mathrm{obs}$. For example:

$\blacksquare$\,\textbf{Examples 1 \& 2 (back-door ATE \& CATE):}
In the back-door adjustment settings, we may parameterize a model of the data-generating process $\prob_\mathrm{obs}$ by  $\prob_{\mathrm{obs},\xi}(O) = p_{\xi_{x}}(X)p_{\xi_{a}}(A \mid X)p_{\xi_{y}}(Y \mid A,X)$ from a corresponding model class $Q_{\mathrm{obs},\xi} = \bigl\{\prob_{\mathrm{obs},\xi}(O)  \mid  \xi = (\xi_{x},\xi_{a},\xi_{y}) \in \Xi \bigr\}$, together with some prior $\pi(\xi) \in \mathcal{P}(\Xi)$ on the parameters $\xi$. Notice that $\xi_{a}$ and $\xi_{y}$ actually parametrize the propensity and outcome regression nuisances. The standard Bayes approach thus forces us to specify a prior over nuisances and to compute a posterior for the entire $\xi$, which will typically be higher-dimensional than $\theta$, possibly even infinite-dimensional. The posterior for $\theta$ must then finally be computed as a marginalization of the overall posterior over all (model) parameters $\xi$ of the data-generating process.

As illustrated by the above example, several practical difficulties arise even in seemingly simple causal settings. In particular:
\vspace{-0.3cm}
\begin{enumerate}
\item \emph{Model misspecification.} The restricted model class $\mathcal{Q}_\mathrm{obs} \subset \mathcal{P}_\mathrm{obs}$ may fail to contain the true observational distribution $\prob_\mathrm{obs}^\star$, especially if the model class is overly restrictive or otherwise misspecified. In this case, the likelihood is misspecified and the correct Bayesian posterior is \textit{not} obtained. Nevertheless, such restrictions are often unavoidable in practice for computational and regularization reasons.

\item \emph{Priors on high-dimensional nuisance functions.} As illustrated in the back-door example, specifying a likelihood requires placing priors on nuisance components such as propensity scores and outcome regressions. Even after restricting $\mathcal{Q}_\mathrm{obs}$, these nuisance objects are typically high-dimensional functions, for which posterior convergence and Bernstein–von Mises approximation are \textit{not} guaranteed \textit{unless strong additional assumptions are enforced } \citep{freedmanBernsteinVonMisesTheorem1999}.

\item \emph{Sensitivity to nuisance priors, including feedback problems.} The (marginalized) posterior over the causal estimand $\theta$ depends on an appropriate prior specification for the nuisance components but correct prior elicitation is difficult. For example, even when the propensity and outcome nuisances are given independent priors, a joint likelihood couples their posteriors; outcome information can then feed back into propensity estimation, which may harm balance and robustness under misspecification (the feedback problem) \cite{stephensCausalInferenceMisspecification2022, Orihara.652025}.

\item \emph{Lack of direct control over the induced prior on the causal estimand.}
Placing a prior on $\prob_\mathrm{obs}$ induces also a prior on the causal estimand $\theta$ via a non-trivial mapping $\pi(\theta) = \int \delta_{\bar Q(\prob_\mathrm{obs})}(\theta)  
\pi(\prob_\mathrm{obs}) \; \rm d \prob_\mathrm{obs}$. As a consequence, it is difficult---if not impossible---to directly elicit prior beliefs about $\theta$. This is critical if we want to follow the philosophy of Bayesian updating of beliefs; yet, following the exact Bayes rule formula in causal inference settings may thus make the task of eliciting a prior belief on $\theta$ difficult.
\end{enumerate}

\vspace{-0.2cm}
With the understanding of how standard Bayesian inference in causal inference is made difficult by nuisance functions, we now turn to generalized Bayes for an alternative solution.

\vspace{-0.3cm}
\section{Method (Generalized Bayes)}
\vspace{-0.1cm}

We proceed in four steps. In §\ref{sec:Gibbs_posteriors}, we introduce generalized posteriors (i.e., Gibbs posteriors) based on identification-driven losses, which define a principled Bayesian update of belief for causal estimands. In §\ref{sec:feasible_posteriors}, we turn to the realistic case where nuisances must be estimated 
and, in §\ref{sec:algorithm}, we present an algorithm for computing the corresponding feasible generalized posteriors. In §\ref{sec:theoretical_results}, we provide our main theoretical result: for Neyman-orthogonal losses with cross-fitting, the feasible posterior converges to its oracle counterpart and yields valid uncertainty quantification.

\vspace{-0.2cm}
\subsection{Gibbs posteriors}
\label{sec:Gibbs_posteriors}
\vspace{-0.1cm}

To define a Gibbs posterior, we consider a fixed causal setting $\mathcal{C}$ with an identified estimand $\theta^\star \in \Theta$ and an estimation strategy $S$ with loss $\L^{S}_{n}(\theta ;\eta^{S})$. Let $\pi(\theta) \in \mathcal{P}(\Theta)$ denote a prior distribution on the causal estimand, and let $\omega > 0$ be a calibration parameter. Then, the corresponding \textit{generalized 
posterior} (i.e., Gibbs posterior) \citep{Bissiri.2016} is defined as
\begin{align}
    q^{S}_{n}(\theta \mid \mathcal{D}_n) \propto \exp\bigl\{-\omega n \L^{S}_{n}(\theta ;\eta^{S}) \bigr\} \pi(\theta) .
\end{align}
This construction turns any loss-based estimator of a causal estimand $\theta^\star$ into a full posterior distribution over $\Theta$ \textit{without} specifying a likelihood or modeling the observational distribution $\prob_\mathrm{obs}$. Equivalently, the same Gibbs posterior can be characterized as the solution to an optimization problem over distributions

\vspace{-0.5cm}
\footnotesize
\begin{align*}
    q^{S}_{n}(\theta \mid \mathcal{D}_n) = \argmin_{q \in \mathcal{P}(\Theta)}
    \left\{ 
    \E_{\theta \sim q}\bigl[\omega n \L^{S}_{n}(\theta ;\eta^{S}) \bigr] + \KL( q \parallel \pi)
    \right\}
\end{align*}
\normalsize
\vspace{-0.5cm}

where $\KL(\cdot\parallel\cdot)$ denotes the Kullback--Leibler divergence \citep{JeremiasKnoblauch.2022}. This offers a variational formulation, which is particularly useful in practice, as simply replacing $\mathcal{P}(\Theta)$ by a desired variational family yields a tractable variational objective for approximating the generalized posterior.

\vspace{-0.3cm}
\subsection{Feasible posteriors}
\label{sec:feasible_posteriors}
\vspace{-0.1cm}

The Gibbs posterior defined above assumes that the nuisance components entering the loss are known. In practice, however, these nuisances must be estimated from data. 
Let $\P$ denote the true observational distribution, and write
$\eta^S_0 := \eta^S(\P)$ for the corresponding true (oracle) nuisances of the estimation strategy $S$. We define the \emph{oracle} Gibbs posterior as
\begin{align}
q^{S}_{n,\mathrm{or}}(\theta \mid \mathcal{D}_n)
:= q^{S}_{n}(\theta \mid \mathcal{D}_n;\eta^{S}_0)  .
\end{align}
In practice, $\eta^S_0$ is unknown and must be estimated, which yields the \emph{feasible} Gibbs posterior
\begin{align}
q^{S}_{n,\mathrm{fe}}(\theta \mid \mathcal{D}_n)
:= q^{S}_{n}(\theta \mid \mathcal{D}_n;\hat\eta^{S}).
\end{align}

\vspace{-0.3cm}
\subsection{Algorithm}
\label{sec:algorithm}
\vspace{-0.1cm}

To compute the feasible Gibbs posterior in practice, we propose a generic algorithmic procedure (see Algorithm~\ref{alg:causal-gibbs}). The algorithm is agnostic to the specific causal estimand (e.g., ATE, CATE) and estimation strategy (e.g., Neyman-orthogonal estimators). The algorithm employs cross-fitting to estimate nuisance components on held-out data, which mitigates overfitting and is needed for Neyman-orthogonality later.

\textbf{Choosing the calibration parameter $\omega$:}\label{sec:omega-calibration-explanation}
The calibration parameter $\omega$ is not specified by either the prior or the loss, and must be chosen separately. To achieve frequentist-valid uncertainty quantification, we follow standard practice in generalized Bayes and tune $\omega$, so that a chosen $(1-\alpha)$ posterior credible set has approximate $(1-\alpha)$ repeated-sampling coverage. Concretely, we use the bootstrap calibration scheme of \citet{syringCalibratingGeneralPosterior2019}: for a candidate $\omega$, we construct the credible region $C_{\omega,\alpha}(\mathcal{D}_n)$, estimate its coverage by nonparametric bootstrap resampling (by checking containment of the point estimate $\hat\theta=\argmin_{\theta \in \Theta} \L_{n}^{S}(\theta,\hat \eta^{S})$), and we update $\omega$ via one-dimensional search or stochastic approximation until the empirical coverage matches $1-\alpha$. We refer to \citet{syringCalibratingGeneralPosterior2019} for a detailed discussion.

When the object of interest is infinite-dimensional (e.g., a CATE function $\theta(\cdot)$), Bernstein--von Mises results and calibration guarantees only typically apply to finite-dimensional functionals $F(\theta)$ (such as finite set of point evaluations, the mean, the median, etc.). In these cases, we calibrate $\omega$ for the reported functional $F(\theta)$ using the same procedure. For squared-error losses, the Gibbs posterior corresponds to a Gaussian working likelihood with variance $\omega^{-1}$. Hence, a convenient plug-in choice is therefore $\omega=\widehat{\Var}(\hat Y^{S})^{-1}$, where $\hat Y^{S}$ denotes the pseudo-outcome.

\vspace{-0.1cm}
\begin{algorithm}[htb]
\caption{Gibbs posteriors for causal inference} 
\label{alg:causal-gibbs}
{\footnotesize
\textbf{Input:}  Data $\mathcal{D}_n=\{O_i\}_{i=1}^n$; estimation strategy $S$ (loss $\ell^S$ and nuisance learner); prior $\pi(\theta)$;
number of folds $K$; calibration rule $\mathsf{Calibrate}_\omega$; posterior engine (i.e., closed-form, variational inference, or MCMC sampling). \\
\textbf{Output:} $q^{S}_{n,\mathrm{fe}}$
}
\begin{algorithmic}[1]
\footnotesize
\State Split indices $\{1,\dots,n\}$ into folds $\{I_k\}_{k=1}^K$.
\For{$k=1,\dots,K$}
    \State Fit nuisances $\hat\eta^{S,(-k)}$ on $\{O_i: i\notin I_k\}$.
\EndFor 
\State Form the cross-fitted empirical loss
\vspace{-0.2cm} 
\begin{align*}
    \L^S_n(\theta;\hat\eta^S):=\frac{1}{n}\sum_{k=1}^K\sum_{i\in I_k}\ell^S \big( O_i;\theta,\hat\eta^{S,(-k)} \big) .
\end{align*}
\vspace{-0.2cm} 
\State Calibrate the parameter $\omega \leftarrow \mathsf{Calibrate}_\omega(\mathcal{D}_n,\L^S_n,\pi)$
\State Construct the generalized variational objective
\begin{align*}
    \mathcal{J}^{S}_{n,\mathrm{fe}}(q)
    := \omega n\,\E_{\theta\sim q}\!\bigl[\L^S_n(\theta;\hat\eta^S)\bigr] + \KL(q\parallel \pi).    
\end{align*}
\State Compute $q^{S}_{n,\mathrm{fe}} \in \argmin_{q\in\mathcal{Q}} \mathcal{J}^{S}_{n,\mathrm{fe}}(q)$ using the posterior engine
\State \textbf{return} $q^{S}_{n,\mathrm{fe}}$ and posterior summaries (mean, credible interval).
\end{algorithmic}
\end{algorithm}
\vspace{-0.1cm}

\vspace{-0.3cm}
\subsection{Theoretical results}
\label{sec:theoretical_results}
\vspace{-0.1cm}

{Our central methodological question is:} \emph{When does the feasible Gibbs posterior $q^{S}_{n,\mathrm{fe}}$ behave like the oracle Gibbs
$q^{S}_{n,\mathrm{or}}$ despite the nuisance estimation?} 

The answer depends on how the nuisance estimation error enters the loss $\L^S_n(\theta;\hat\eta^S)$. State-of-the-art methods for causal ML \citep[e.g.,][]{Chernozhukov.2018, Nie.13122020, Kennedy.30042020}
address this challenge by constructing \emph{Neyman-orthogonal} losses \citep{Foster.1252019}. For such orthogonal losses, the score $D_{\theta}\L^S_n$---i.e., which determines the first-order optimality condition and the corresponding gradient updates---is insensitive to local perturbations in the nuisances $D_{\eta^{S}}$ around the true value $\eta^{S}_0$, i.e.,
\begin{align}\label{def:orth-loss}
    D_{\eta^{S}}D_{\theta}\L^S_n(\theta^\star;\eta^{S}_0)[\theta-\theta^\star,\eta - \eta^{S}_0] = 0.
\end{align}
Neyman-orthogonality makes the leading bias term $\hat \eta^{S} - \eta^{S}_{0}$ vanish, so that the estimation of $\theta$ can remain $\sqrt{n}$-consistent and asymptotically normal, even when the nuisances are learned at slower, nonparametric rates (up to a product-rate condition; such as those of flexible machine learning models) \cite{Foster.1252019}.

In Theorem~\ref{thm:posterior-stability}, we show that leveraging the same orthogonality principle for constructing generalized posteriors yields an analogous result: the feasible posterior converges to the oracle even at less-than-parametric rates of convergence of the nuisances.

\begin{theorem}[Posterior stability under orthogonal losses (informal)]
\label{thm:posterior-stability}
Assume that: (i) the \emph{oracle} Gibbs posterior satisfies a Bernstein--von Mises (BvM) approximation; 
(ii) the loss is Neyman orthogonal at $(\theta^\star,\eta^S_0)$ in the sense of Eq.~\ref{def:orth-loss}; and 
(iii) $\hat\eta^S$ is obtained by sample splitting/cross-fitting and satisfies $\|\hat\eta^S-\eta^S_0\| =: r_n \to 0$.
Then, the feasible generalized posterior is asymptotically close to the oracle one at the rate
\begin{align}
\mathrm{TV}\!\left(q^{S}_{n,\mathrm{fe}},\ q^{S}_{n,\mathrm{or}}\right)
\;=\; O_{\prob}\!\bigl(\sqrt{n}\, r_n^{2}\bigr) \;+\; o_{\prob}(1),
\label{eq:tv-bound}
\end{align}
where $\mathrm{TV}$ denotes total variation on the target of inference: 
either (a) the full parameter $\theta\in\mathbb{R}^d$ (finite-dimensional case), or 
(b) the induced posterior of any fixed finite-dimensional projection $T(\theta)\in\mathbb{R}^m$ when $\theta$ is infinite-dimensional (e.g.\ pointwise CATE values at fixed covariate points).
In particular, if $r_n=o(n^{-1/4})$, then $\mathrm{TV}\to0$ in probability and the feasible posterior inherits the corresponding oracle BvM limit.
\end{theorem}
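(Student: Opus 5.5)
The plan is to reduce the comparison of the two Gibbs posteriors to a comparison of their Gaussian BvM approximations, and then to control how far the feasible loss shifts those approximations. We work in the local parametrization $h=\sqrt n(\theta-\theta^\star)$, in case (a) directly and in case (b) after pushing forward through $T$. Assumption (i) gives
\begin{align*}
\mathrm{TV}\bigl(q^{S}_{n,\mathrm{or}},\,\mathcal N(\hat\theta_{\mathrm{or}},(\omega nV)^{-1})\bigr)=o_{\prob}(1),
\end{align*}
where $V=D^2_\theta\L^S(\theta^\star;\eta^S_0)$ is the population Hessian and $\hat\theta_{\mathrm{or}}$ is the oracle minimizer, or equivalently the one-step center $\theta^\star-V^{-1}D_\theta\L^S_n(\theta^\star;\eta^S_0)$.

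The first step is a local quadratic (LAN-type) expansion of the feasible log-posterior ratio. For $h$ in a compact set,
\begin{align*}
\omega n\bigl[\L^S_n(\theta^\star+h/\sqrt n;\hat\eta^S)-\L^S_n(\theta^\star;\hat\eta^S)\bigr]
= \omega\bigl(\sqrt n\,D_\theta\L^S_n(\theta^\star;\hat\eta^S)[h]+\tfrac12 h^\top V h\bigr)+o_{\prob}(1).
\end{align*}
This expansion holds uniformly on compacts. Two ingredients justify it. The Hessian term is continuous in $\eta$, so replacing $\hat\eta^S$ by $\eta^S_0$ costs $o_\prob(1)$ since $r_n\to0$. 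The remainder is controlled by the same smoothness assumed for the oracle BvM. Given this expansion, the feasible posterior is approximately $\mathcal N(\hat\theta_{\mathrm{fe}},(\omega nV)^{-1})$. The feasible center is $\hat\theta_{\mathrm{fe}}=\theta^\star-V^{-1}D_\theta\L^S_n(\theta^\star;\hat\eta^S)$. Concentration outside compacts is handled by the usual tail argument, using identifiability and the prior being positive and continuous at $\theta^\star$.

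The second step, which is the core, bounds the score difference
\begin{align*}
\Delta_n:=\sqrt n\bigl(D_\theta\L^S_n(\theta^\star;\hat\eta^S)-D_\theta\L^S_n(\theta^\star;\eta^S_0)\bigr).
\end{align*}
Using cross-fitting, condition on the training folds and split $\Delta_n$ into two parts:
\begin{enumerate}
\item An empirical-process term, $\sqrt n(\E_n-\E)[\cdot]$ applied to the score difference. Conditionally on the training folds this has mean zero, and its conditional variance is $O(r_n^2)$ by $L_2$-continuity of the score in $\eta$. Chebyshev's inequality then gives $o_\prob(1)$ with no Donsker condition.
\item A population bias term, $\sqrt n\,\E[D_\theta\ell(\theta^\star;\hat\eta)-D_\theta\ell(\theta^\star;\eta_0)]$. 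Expand this in $\eta$ by a second-order Taylor expansion. The first-order term vanishes by Neyman orthogonality (ii), Eq.~\ref{def:orth-loss}, and the remainder is bounded by $\sqrt n\sup\|D^2_\eta D_\theta\L\|\,r_n^2=O_\prob(\sqrt n r_n^2)$.
\end{enumerate}
Together, $\|\Delta_n\|=O_\prob(\sqrt n r_n^2)+o_\prob(1)$.

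The final step compares the two Gaussians, which share the covariance $(\omega nV)^{-1}$ and differ only in their centers, separated by $V^{-1}\Delta_n/\sqrt n$ in $\theta$-units. Standard Gaussian total-variation bounds give
\begin{align*}
\mathrm{TV}\le \tfrac12\sqrt{\omega}\,\|V^{-1/2}\Delta_n\|=O_\prob(\sqrt n r_n^2)+o_\prob(1).
\end{align*}
The triangle inequality with the two BvM approximations then yields Eq.~\eqref{eq:tv-bound}. For case (b), the same argument is applied to the finite-dimensional pushforward. One could use the data-processing inequality, but it is more direct to run the expansion for the projected loss along $T$. If $r_n=o(n^{-1/4})$, then $\sqrt n r_n^2\to0$, and the feasible posterior inherits the oracle limit.

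The main obstacle is the first step, establishing the quadratic expansion uniformly for the \emph{feasible} loss. Assumption (i) is stated only for the oracle, so stochastic equicontinuity of the Hessian and remainder in $\eta$ has to be imported as an implicit regularity condition. I would first try to get it from cross-fitting conditionally on the folds, treating $\hat\eta$ as fixed. In case (b) the infinite-dimensional nuisance of the projected problem would require the analogous regularity.
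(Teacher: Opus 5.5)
Your proposal is correct and follows essentially the same route as the paper. Both arguments use a triangle inequality through the oracle and feasible Gaussian BvM approximations with common covariance $(\omega nV)^{-1}$, a KL/Pinsker bound on the TV between those two Gaussians in terms of their center shift, and Neyman orthogonality to make that shift second order in $r_n$. Your feasible LAN step corresponds to the paper's Lemma~\ref{lem:feasible-bvm}; the paper likewise has to add the feasible BvM as an assumption, since (i) covers only the oracle. Your split of $\Delta_n$ into a cross-fitted empirical-process term and an orthogonality-killed bias term fills in the center-shift step that the paper only sketches and folds into assumption (A3). It also routes the $O_\prob(r_n)$ empirical-process contribution into the $o_\prob(1)$ term, rather than asserting $\|\hat\theta_{\mathrm{fe}}-\hat\theta_{\mathrm{or}}\|=O_\prob(r_n^2)$ outright, which fails when $r_n=o(n^{-1/2})$.
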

\vspace{-0.5cm} 
\begin{proof}
See Appendix~\ref{Appx:proofs} for formal statement and proof.
\end{proof}
\vspace{-0.2cm} 

The rate condition $r_n=o(n^{-1/4})$ matches the familiar threshold from orthogonal learning used in double ML theory \citep{Chernozhukov.2018}: the nuisance estimators may converge substantially slower than $n^{-1/2}$, while still yielding asymptotically valid uncertainty for low-dimensional targets. We have thus extended the theoretical guarantees of orthogonal statistical learning \cite{Foster.1252019} from point estimation to their corresponding generalized posteriors.

\begin{remark}[Why this fails without orthogonality]\label{rem:nonorth}
For non-orthogonal losses, the analogous $\mathrm{TV}$ of Eq.~\eqref{eq:tv-bound} typically scales as $O_{\prob}(\sqrt{n}\,r_n)$ rather than $O_{\prob}(\sqrt{n}\,r_n^2)$, which requires nuisance estimators to converge at near-parametric rates in order to keep posterior perturbations small. 
This explains why generalized posteriors for causal inference built from na{\"i}ve plug-in losses can be unstable in modern settings with high-dimensional and flexible ML-based nuisance estimators.
\end{remark}

\begin{table*}[t]
\centering
\resizebox{\textwidth}{!}{%
\begin{tabular}{lcccccccccc}
\hline
Strategy & Orthogonal & $\mathcal{D}_{1}$ & $\mathcal{D}_{2}$ & $\mathcal{D}_{3}$ & $\mathcal{D}_{4}$ & $\mathcal{D}_{5}$ & $\mathcal{D}_{6}$ & $\mathcal{D}_{7}$ & $\mathcal{D}_{8}$ & $\mathcal{D}_{9}$ \\ 
\hline
RA & \xmark & $0.580$ \scriptsize (0.432-0.718) & $0.520$ \scriptsize (0.374-0.663) & $0.480$ \scriptsize (0.337-0.626) & $0.680$ \scriptsize (0.533-0.805) & $0.580$ \scriptsize (0.432-0.718) & $0.080$ \scriptsize (0.022-0.192) & $0.300$ \scriptsize (0.179-0.446) & $0.460$ \scriptsize (0.318-0.607) & $0.420$ \scriptsize (0.282-0.568) \\
IPW & \xmark & $1.000$ \scriptsize (0.929-1.000) & $1.000$ \scriptsize (0.929-1.000) & $1.000$ \scriptsize (0.929-1.000) & $1.000$ \scriptsize (0.929-1.000) & $1.000$ \scriptsize (0.929-1.000) & $0.420$ \scriptsize (0.282-0.568) & $0.980$ \scriptsize (0.894-0.999) & $1.000$ \scriptsize (0.929-1.000) & $1.000$ \scriptsize (0.929-1.000) \\ 
AIPW & \cmark & $\textbf{0.940}$ \scriptsize (0.835-0.987) & $\textbf{0.980}$ \scriptsize (0.894-0.999) & $\textbf{0.980}$ \scriptsize (0.894-0.999) & $\textbf{0.920}$ \scriptsize (0.808-0.978) & $\textbf{0.940}$ \scriptsize (0.835-0.987) & $\textbf{0.580}$ \scriptsize (0.432-0.718) & $\textbf{0.940}$ \scriptsize (0.835-0.987) & $\textbf{0.980}$ \scriptsize (0.894-0.999) & $\textbf{0.940}$ \scriptsize (0.835-0.987) \\ 
\hline
\end{tabular}
 
}

\caption{
\scriptsize{
\textbf{Coverage for back-door ATE.} The $95\%$ $\mathrm{CrI}^{(r)}_{0.95}$ credible interval's coverage $\hat C$ (and its $95\%$ confidence interval) for each strategy $S \in \{\mathrm{AIPW,IPW,RA}\}$ across $9$ synthetic datasets $\{\mathcal{D}_1,\ldots,\mathcal{D}_9\}$. Results are computed from $R=50$ repetitions at sample size $n=1000$. Closest to exact frequentist calibration at $95\%$ is in \textbf{bold}.
}
}
\label{table:ate-coverage-analytic}
\end{table*}

\begin{table*}[t]
\centering
\resizebox{\textwidth}{!}{%
\begin{tabular}{lcccccccccc}
\hline
Strategy & Orthogonal & $\mathcal{D}_{1}$ & $\mathcal{D}_{2}$ & $\mathcal{D}_{3}$ & $\mathcal{D}_{4}$ & $\mathcal{D}_{5}$ & $\mathcal{D}_{6}$ & $\mathcal{D}_{7}$ & $\mathcal{D}_{8}$ & $\mathcal{D}_{9}$ \\
\hline
RA & \xmark & $\textcolor{gray}{\cancel{0.099 (0.002)}}$ & $\textcolor{gray}{\cancel{0.112 (0.004)}}$ & $\textcolor{gray}{\cancel{0.113 (0.004)}}$ & $\textcolor{gray}{\cancel{0.139 (0.007)}}$ & $\textcolor{gray}{\cancel{0.099 (0.002)}}$ & $\textcolor{gray}{\cancel{0.112 (0.009)}}$ & $\textcolor{gray}{\cancel{0.108 (0.009)}}$ & $\textcolor{gray}{\cancel{0.108 (0.002)}}$ & $\textcolor{gray}{\cancel{0.156 (0.012)}}$ \\
IPW & \xmark & $0.479 (0.018)$ & $0.490 (0.019)$ & $0.542 (0.024)$ & $0.687 (0.067)$ & $0.649 (0.220)$ & $\textcolor{gray}{\cancel{0.693 (0.256)}}$ & $0.730 (0.246)$ & $0.644 (0.222)$ & $2.835 (0.055)$ \\
AIPW & \cmark & $\textbf{0.263 (0.010)}$ & $\textbf{0.273 (0.009)}$ & $\textbf{0.272 (0.008)}$ & $\textbf{0.289 (0.009)}$ & $\textbf{0.275 (0.021)}$ & $\textcolor{gray}{\cancel{0.446 (0.136)}}$ & $\textbf{0.587 (0.163)}$ & $\textbf{0.336 (0.058)}$ & $\textbf{0.466 (0.012)}$ \\
\hline
\end{tabular}
 
}
\caption{
\scriptsize{
\textbf{CrI length for back-door ATE.} The average length (and standard deviation) of the $95\%$ $\mathrm{CrI}^{(r)}_{0.95}$ credible interval across $R=50$ repetitions for each strategy $S \in \{\mathrm{AIPW,IPW,RA}\}$ across $9$ synthetic datasets $\{\mathcal{D}_1,\ldots,\mathcal{D}_9\}$. Results are computed at sample size $n=1000$. The lengths of unfaithful credible intervals are \textcolor{gray}{striked out in gray}. The narrowest $\mathrm{CrI}^{(r)}_{0.95}$ is length is highlighted in \textbf{bold}.
}
}
\label{table:ate-length-analytic}
\vspace{-0.5cm}
\end{table*}

\vspace{-0.1cm}
$\blacksquare$\,\textbf{Example 1 (back-door ATE).}
We illustrate the robustness of Gibbs posteriors by deriving an analytic solution for the back-door adjusted ATE setting using the squared-error loss constructed from the AIPW pseudo-outcome $\hat{Y}^\mathrm{AIPW}$. Specifically, let us consider the loss
\begin{align}
    \L^{\mathrm{AIPW}}_n(\theta;\hat\eta)
    = \tfrac{1}{2n}\sum_{i=1}^n \bigl(\hat{Y}^\mathrm{AIPW}(O_i;\hat\eta)-\theta\bigr)^2.
    \label{eq:ate-squared-loss}
\end{align}
The corresponding loss minimizer is the usual AIPW estimator, i.e., $\hat\theta^{\mathrm{AIPW}} = \argmin_\theta \L^{\mathrm{AIPW}}_n(\theta;\hat\eta) = \tfrac{1}{n}\sum_{i=1}^n \hat{Y}^\mathrm{AIPW}_i$. The oracle pseudo-outcome $\hat Y^\mathrm{AIPW}(\cdot,\eta)$ coincides with the efficient influence function of $\theta^{\mathrm{AIPW}}$ of the ATE, implying that the estimator $\hat\theta^{\mathrm{AIPW}}$ is asymptotically unbiased and statistically efficient. 

The corresponding feasible Gibbs posterior is given by
\footnotesize
\begin{align}
    q_{n,\mathrm{fe}}^{\mathrm{AIPW}}(\theta\mid\mathcal{D}_n)
    &\propto
    \exp\!\left\{-\omega n\,\L^{\mathrm{AIPW}}_n(\theta;\hat\eta)\right\}\pi(\theta) \\
    &=
    \exp\!\left\{-\tfrac{\omega}{2}\sum_{i=1}^n(\hat Y^\mathrm{AIPW}_i-\theta)^2\right\}\pi(\theta) ,
    \label{eq:ate-gibbs-conjugate}
\end{align}
\normalsize
which allows for a closed-form solution under a normal prior $\pi(\theta)=\mathcal{N}(m_0,s_0^2)$. In that case, the posterior is exactly normal, i.e.,
\footnotesize
\begin{align*}
    q_{n,\mathrm{fe}}^{\mathrm{AIPW}}(\cdot \mid\mathcal{D}_n) &= \mathcal{N}(m_p,s_p^2), \\
    s_p^2 &= \bigl(s_0^{-2}+\omega n\bigr)^{-1}, \\
    m_p &= s_p^2\bigl(s_0^{-2}m_0+\omega n\,\hat \theta^{\mathrm{AIPW}} \bigr).
\end{align*}
\normalsize
In particular, under a diffuse prior $s_0^2\to\infty$, we have $m_p\to \hat\theta^{\mathrm{AIPW}}$ and $s_p^2\to (\omega n)^{-1}$, meaning the generalized posterior centers exactly at the classical
orthogonal point estimate (while $\omega$ controls its spread). 

Importantly, the only data-dependent term of the analytical solution of the posterior parameters $m_p,s_p^2$ is the point-estimate $\hat\theta^{\mathrm{AIPW}}$, i.e., the minimizer of $\L^{\mathrm{AIPW}}_n(\theta;\hat\eta)$. As such, we can see how Neyman-orthogonality of the loss directly translates into convergence guarantees for the generalized posterior under nuisance estimation error. Our theorem shows this generally, even when no analytical solution to the posterior is available. 

Finally, note that the analytical solution to the posterior was achieved by recognizing that Eq.~\eqref{eq:ate-gibbs-conjugate} is algebraically equivalent to having a Gaussian ``working likelihood'' of the pseudo-outcomes $\hat Y^\mathrm{AIPW}_i \mid \theta \sim \mathcal{N}(\theta, \omega^{-1})$ and employing the normal-normal conjugacy. Under this interpretation, the generalized posterior is calibrated by setting $\omega^{-1} = \Var(\hat Y^{\mathrm{AIPW}})$. Since $\hat \theta^{\mathrm{AIPW}}$ is the efficient estimator of $\theta_{\mathrm{ATE}}$, it achieves the lowest possible asymptotic variance among all regular, asymptotically linear (RAL) estimators. Consequently, 
\begin{align}
    s_p^2 \to (n\omega)^{-1} = n^{-1}\Var(\hat Y^{\mathrm{AIPW}}) = \Var(\hat\theta_{\mathrm{ATE}}),
\end{align}
which implies that, under a diffuse prior, the corresponding frequentist-calibrated generalized posterior achieves the smallest achievable frequentist-calibrated variance and has, in turn, the narrowest faithful credible intervals. 


\vspace{-0.3cm}
\section{Experiments}
\label{sec:experiments}
\vspace{-0.1cm}
\FloatBarrier

\begin{figure}[t]
\vspace{-0.5cm}
\centering
\includegraphics[width = 1\columnwidth]{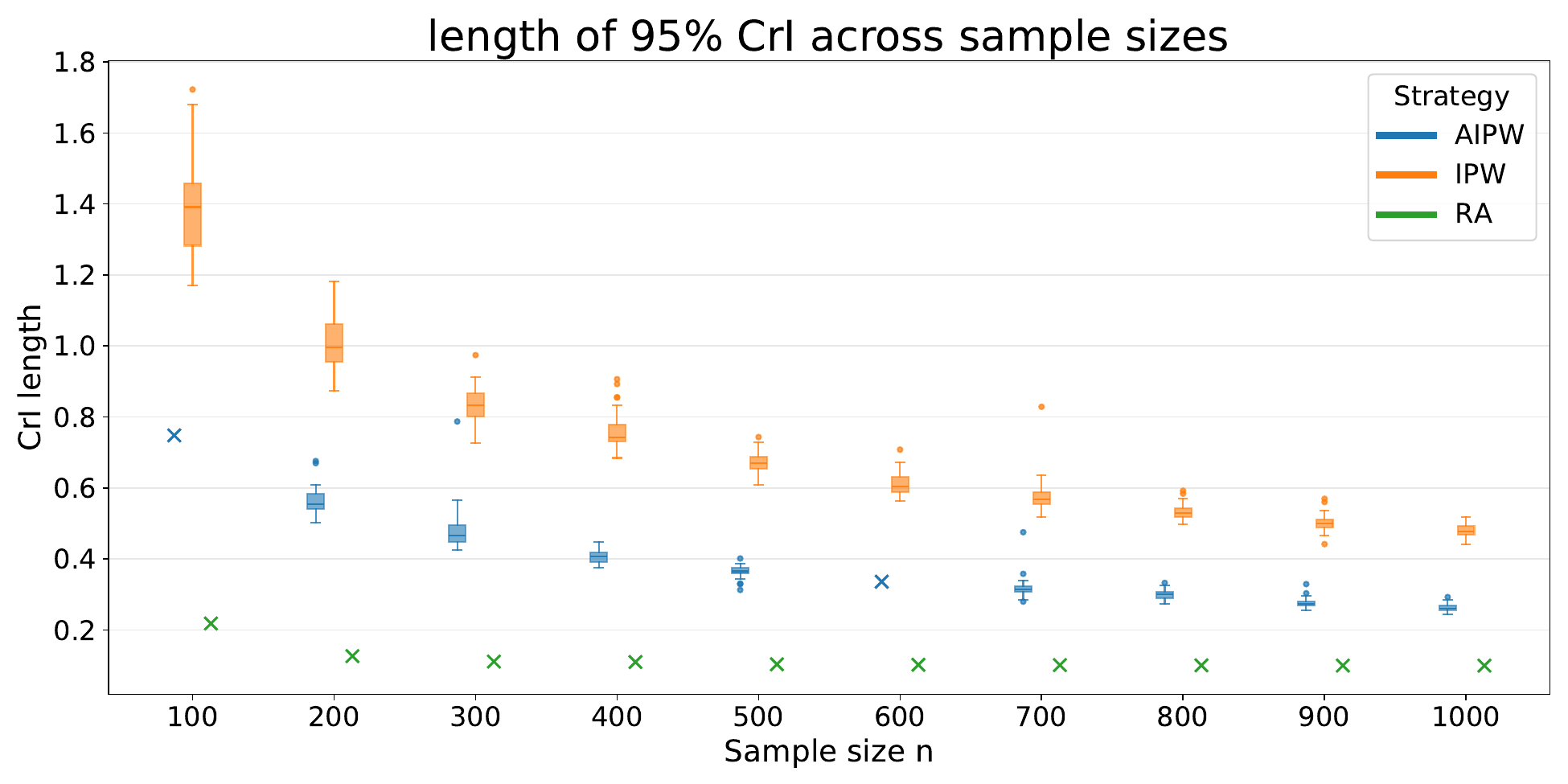}
\vspace{-0.5cm}
\caption{
\scriptsize{
\textbf{CrI length for back-door ATE.} Box plots of the lengths of the $95\%$ $\mathrm{CrI}^{(r)}_{0.95}$ credible interval across $R=50$ repetitions for each strategy $S \in \{\mathrm{AIPW,IPW,RA}\}$, the dataset $\mathcal{D}_1$, and a varying sample size $n \in \{100,\ldots,1000\}$. Unfaithful credible intervals are reported with a cross at their median length.  
}
}
\label{fig:ate-length-convergence}
\vspace{-0.7cm}
\end{figure}

In our experiments, we demonstrate the broad applicability of our method to different settings and strategies. We show here the results for the backdoor-adjusted ATE setting; experiments for CATE are reported in Appendix~\ref{app:other-experiments}. For the backdoor-adjusted ATE setting, we demonstrate how generalized posteriors based on Neyman-orthogonal losses yield (i)~\emph{faithful} uncertainty quantification in the form of nominal $(1-\alpha)$ credible interval coverage, and (ii)~\emph{efficient} uncertainty quantification, measured by credible interval (CrI) length among faithful methods.

\vspace{-0.1cm}
\textbf{Settings:} We consider 9 synthetic backdoor datasets $\{\mathcal{D}_1,\ldots,\mathcal{D}_9\}$, each with a known ground-truth ATE. For each dataset $\mathcal{D}_k$, we generate $n$ i.i.d.\ observations under different data-generating processes (see Appendix~\ref{app:dgp-backdoor-description} for details). 
We construct generalized posteriors using three standard strategies $S \in \{\mathrm{RA,IPW,AIPW}\}$. Each strategy induces a loss used in the generalized Bayes update, with the corresponding nuisance components estimated with flexible ML models. The calibration parameter $\omega$ is estimated with the standard calibration procedure (see Appendix~\ref{app:implementation} for full implementation details).
For each pair $(\mathcal{D}_k,S)$, we repeat the full posterior estimation pipeline for $R=50$ random seeds.

\vspace{-0.1cm}
\textbf{Metrics:} We measure: (i)~\textit{Coverage:} For each run $r = 1, \ldots, R$, we form a $95\%$ credible interval $\mathrm{CrI}^{(r)}_{0.95}$ for $\theta$ from the generalized posterior.
Empirical coverage is the fraction of runs where the true estimand value $\theta^\star$ lies in the credible interval $\hat C = \tfrac{1}{R}\sum \ind\{\theta^\star \in \mathrm{CrI}^{(r)}_{0.95}\}$. (ii)~\textit{Interval length among faithful models:} To avoid comparing artificially narrow but miscalibrated intervals, we report average interval length only for those strategy-dataset pairs for which empirical coverage is faithful. Specifically, we cross out (in gray color) values where the upper-bound of the $95\%$ confidence interval of $\hat C$ is below $95\%$\footnote{While the \textit{exact} correct coverage is $95\%$, we do not wish to exclude more conservative intervals. Clearly, any method with a higher-than-threshold coverage with narrow intervals is desirable.}. We measure the credible interval length by the empirical mean across repetitions. 

\vspace{-0.1cm}
\textbf{Results:} 
Table~\ref{table:ate-coverage-analytic} shows the coverage by the $95\%$ credible interval $\hat{C}$ of each strategy in each setting. As expected from the theory, the (Neyman-orthogonal) $\mathrm{AIPW}$-based posterior estimator achieves the most accurate coverage across all datasets. 
The lengths of faithful credible intervals are shown in Table~\ref{table:ate-length-analytic}. As expected from the theory, the $\mathrm{AIPW}$ credible intervals are the narrowest among the faithful ones. Finally, in Figure~\ref{fig:ate-length-convergence}, we demonstrate the convergence of the credible intervals as sample size increases. \textit{Overall, our experiments show that generalized posteriors constructed from Neyman-orthogonal losses achieve near-nominal frequentist coverage with converging CrI length as sample size increases.}

\vspace{-0.1cm}
\textbf{Conclusion:} In sum, our generalized Bayes framework provides a simple yet powerful framework for Bayesian updating of beliefs for causal inference, which is readily applicable across settings and across different loss-based estimation strategies. 


\section*{Acknowledgments}
This paper is supported by the DAAD programme Konrad Zuse Schools of Excellence in Artificial Intelligence, sponsored by the Federal Ministry of Research, Technology and Space. Additionally, this work has been supported by the German Federal Ministry of Education and Research (Grant: 01IS24082). We also thank Valentyn Melnychuk for valuable comments and suggestions on earlier versions of this manuscript.


\bibliography{references1} 
\bibliographystyle{icml2026}

\clearpage 
\newpage
\appendix
\onecolumn

\clearpage
\section{Proofs}
\label{Appx:proofs}

We provide the formal statement and proof of Theorem~\ref{thm:posterior-stability}. We divide the task into the cases of: (a)~finite dimensional $\theta$, and (b)~finite dimensional projections of an infinite dimensional $\theta$.

\subsection{Finite dimensional \texorpdfstring{$\theta$}{theta}}
\begin{theorem}[Posterior stability under orthogonal losses (finite-dimensional $\theta$)]
\label{thm:posterior-stability-finite}
Let $\Theta\subset\R^d$ be open, and let $\eta\in\mathcal H$ be a nuisance parameter with truth
$\eta_0$. For a measurable loss $\ell(O;\theta,\eta)$, define the empirical and population risks
\begin{equation}
\L_n(\theta;\eta):=\frac1n\sum_{i=1}^n \ell(O_i;\theta,\eta),
\qquad
L(\theta;\eta):=\E_{\P}\{\ell(O;\theta,\eta)\}.
\end{equation}
Fix $\omega>0$ and a prior density $\pi$ on $\Theta$. Define the oracle and feasible Gibbs posteriors
\begin{equation}
q_{n,\mathrm{or}}(\theta\mid\mathcal D_n)\ \propto\ \pi(\theta)\exp\{-\omega n \L_n(\theta;\eta_0)\},
\qquad
q_{n,\mathrm{fe}}(\theta\mid\mathcal D_n)\ \propto\ \pi(\theta)\exp\{-\omega n \L_n(\theta;\hat\eta)\},
\end{equation}
and the corresponding empirical minimizers
\begin{equation}
\hat\theta_{\mathrm{or}}:=\arg\min_{\theta\in\Theta}\L_n(\theta;\eta_0),
\qquad
\hat\theta_{\mathrm{fe}}:=\arg\min_{\theta\in\Theta}\L_n(\theta;\hat\eta).
\end{equation}
Let $\theta^\star$ denote the target parameter (the population minimizer of $L(\theta;\eta_0)$), and
set
\begin{equation}
V_0:=\nabla_{\theta\theta}L(\theta^\star;\eta_0),
\end{equation}
assumed positive definite.

Assume:
\begin{enumerate}
\item[(A1)] \textbf{(BvM approximations for oracle and feasible posteriors).}
The oracle and feasible posteriors admit Bernstein--von Mises approximations with common covariance:
\begin{align}
\mathrm{TV}\!\Bigl(q_{n,\mathrm{or}}(\cdot\mid\mathcal D_n),\ \mathcal N\!\bigl(\hat\theta_{\mathrm{or}},(n\omega V_0)^{-1}\bigr)\Bigr)
&= o_{\P}(1),\label{eq:BvM-or-assmp}\\
\mathrm{TV}\!\Bigl(q_{n,\mathrm{fe}}(\cdot\mid\mathcal D_n),\ \mathcal N\!\bigl(\hat\theta_{\mathrm{fe}},(n\omega V_0)^{-1}\bigr)\Bigr)
&= o_{\P}(1).\label{eq:BvM-fe-assmp}
\end{align}
(For instance, Eq.~(\eqref{eq:BvM-or-assmp}) may follow from \citet{millerAsymptoticNormalityConcentration2021}; and, under sample splitting/cross-fitting
and $\|\hat\eta-\eta_0\|\to0$, the same regularity typically yields Eq.~\eqref{eq:BvM-fe-assmp}.)

\item[(A2)] \textbf{(Neyman orthogonality).}
The loss is Neyman orthogonal at $(\theta^\star,\eta_0)$ in the sense of Eq.~(\ref{def:orth-loss}).

\item[(A3)] \textbf{(Cross-fitting and nuisance rate).}
$\hat\eta$ is obtained by sample splitting/cross-fitting and satisfies
$\|\hat\eta-\eta_0\|=:r_n\to0$.
Moreover, the usual local smoothness/strong-convexity conditions required to justify the orthogonal
M-estimation stability argument hold in a neighborhood of $(\theta^\star,\eta_0)$, so that
$\|\hat\theta_{\mathrm{fe}}-\hat\theta_{\mathrm{or}}\|=O_{\P}(r_n^2)$.
\end{enumerate}

Then, the total variation distance between feasible and oracle posteriors obeys
\begin{align}
\mathrm{TV}\!\left(q_{n,\mathrm{fe}}(\cdot \mid \mathcal{D}_n),\ q_{n,\mathrm{or}}(\cdot \mid \mathcal{D}_n)\right)
\;=\; O_{\P}\!\bigl(\sqrt{n}\, r_n^{2}\bigr) \;+\; o_{\P}(1).
\label{eq:tv-bound-finite}
\end{align}
In particular, if $r_n=o(n^{-1/4})$, then
$\mathrm{TV}\!\left(q_{n,\mathrm{fe}},q_{n,\mathrm{or}}\right)\to0$ in probability; hence, the feasible posterior
inherits the oracle Bernstein--von Mises limit.
\end{theorem}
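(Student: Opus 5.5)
The plan is a triangle-inequality argument through the two Gaussian approximations supplied by (A1), followed by an explicit bound on the total variation between two Gaussians with common covariance and shifted means. Write $\Sigma_n:=(n\omega V_0)^{-1}$. Then
\begin{align*}
\mathrm{TV}\bigl(q_{n,\mathrm{fe}},q_{n,\mathrm{or}}\bigr)
&\le \mathrm{TV}\bigl(q_{n,\mathrm{fe}},\mathcal N(\hat\theta_{\mathrm{fe}},\Sigma_n)\bigr)
+\mathrm{TV}\bigl(\mathcal N(\hat\theta_{\mathrm{fe}},\Sigma_n),\mathcal N(\hat\theta_{\mathrm{or}},\Sigma_n)\bigr)\\
&\quad+\mathrm{TV}\bigl(\mathcal N(\hat\theta_{\mathrm{or}},\Sigma_n),q_{n,\mathrm{or}}\bigr).
\end{align*}
By Eqs.~\eqref{eq:BvM-or-assmp} and \eqref{eq:BvM-fe-assmp}, the first and third terms are $o_{\P}(1)$. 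This yields the $o_{\P}(1)$ part of Eq.~\eqref{eq:tv-bound-finite}, so everything reduces to the middle, purely Gaussian term.

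For the middle term, I will use the closed form for two Gaussians with the same covariance:
\[
\mathrm{TV}\bigl(\mathcal N(\mu_1,\Sigma),\mathcal N(\mu_2,\Sigma)\bigr)=2\Phi(\Delta/2)-1\le \Delta/\sqrt{2\pi},
\qquad \Delta:=\|\Sigma^{-1/2}(\mu_1-\mu_2)\|.
\]
Alternatively, Pinsker's inequality with $\KL=\Delta^2/2$ gives $\mathrm{TV}\le\Delta/2$. With $\Sigma=\Sigma_n$ we get
\[
\Delta_n=\sqrt{n\omega}\,\bigl\|V_0^{1/2}(\hat\theta_{\mathrm{fe}}-\hat\theta_{\mathrm{or}})\bigr\|
\le \sqrt{n\omega\,\lambda_{\max}(V_0)}\;\|\hat\theta_{\mathrm{fe}}-\hat\theta_{\mathrm{or}}\|.
\]
Since $\omega$ and $V_0$ are fixed, (A3) gives $\|\hat\theta_{\mathrm{fe}}-\hat\theta_{\mathrm{or}}\|=O_{\P}(r_n^2)$, and hence the middle term is $O_{\P}(\sqrt n\,r_n^2)$. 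Summing the three terms yields Eq.~\eqref{eq:tv-bound-finite}.

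For the final claim, if $r_n=o(n^{-1/4})$ then $\sqrt n\,r_n^2=o(1)$, so $\mathrm{TV}(q_{n,\mathrm{fe}},q_{n,\mathrm{or}})\to0$ in probability. Any BvM limit of the oracle posterior then transfers to the feasible one by one more triangle inequality.

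The main substantive step is the one packaged into (A3): justifying $\|\hat\theta_{\mathrm{fe}}-\hat\theta_{\mathrm{or}}\|=O_{\P}(r_n^2)$ from orthogonality (A2). If I had to argue it rather than assume it, I would proceed in three steps.
\begin{itemize}
\item[(i)] Expand the first-order conditions $\nabla_\theta\L_n(\hat\theta_{\mathrm{fe}};\hat\eta)=0=\nabla_\theta\L_n(\hat\theta_{\mathrm{or}};\eta_0)$ and use local strong convexity (Hessian near $V_0$). This gives $\|\hat\theta_{\mathrm{fe}}-\hat\theta_{\mathrm{or}}\|\lesssim\|\nabla_\theta\L_n(\hat\theta_{\mathrm{or}};\hat\eta)-\nabla_\theta\L_n(\hat\theta_{\mathrm{or}};\eta_0)\|$.
\item[(ii)] Split this gradient difference into a population part and an empirical-process part. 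The population part, $\nabla_\theta L(\theta;\hat\eta)-\nabla_\theta L(\theta;\eta_0)$, has a first-order term in $\hat\eta-\eta_0$ that vanishes at $\theta^\star$ by Eq.~\eqref{def:orth-loss}; the second-order Taylor remainder is $O(r_n^2)$. The empirical-process part is controlled conditionally on the held-out folds via cross-fitting, and is $O_{\P}(r_n/\sqrt n)$.
\item[(iii)] Handle the mismatch between $\hat\theta_{\mathrm{or}}$ and $\theta^\star$ through the cross-derivative, which contributes $O_{\P}(r_n n^{-1/2})$.
\end{itemize}
The cross terms in (ii) and (iii) are of smaller order than $r_n^2+n^{-1/2}r_n$. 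Strictly, then, the bound is $O_{\P}(r_n^2+r_n/\sqrt n)$, and after multiplication by $\sqrt n$ the extra piece is $O_{\P}(r_n)=o_{\P}(1)$. It is therefore absorbed into the $o_{\P}(1)$ term, which makes the stated form of Eq.~\eqref{eq:tv-bound-finite} robust even if (A3) holds only in this slightly weaker form.
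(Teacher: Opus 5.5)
Your proposal is correct and follows essentially the same route as the paper: a triangle inequality through the two Gaussian approximations from (A1), then a Pinsker (or exact) bound on the total variation between the equal-covariance Gaussians by $\tfrac12\sqrt{n\omega\lambda_{\max}(V_0)}\,\|\hat\theta_{\mathrm{fe}}-\hat\theta_{\mathrm{or}}\|$, and finally (A3) to get $O_{\P}(\sqrt{n}\,r_n^2)$. Your sketch of the center-shift bound mirrors the paper's Step 2. You also correctly note that cross-fitting generally leaves an empirical-process term $O_{\P}(r_n/\sqrt{n})$, which becomes $O_{\P}(r_n)=o_{\P}(1)$ after multiplying by $\sqrt{n}$; this is a valid refinement of the paper's somewhat optimistic $O_{\P}(r_n^2)$ claim in (A3).
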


\begin{proof}[Proof of Theorem~\ref{thm:posterior-stability} (finite-dimensional $\theta$)]
\label{proof:finite-dim}
Let $\theta^\star$ be the true population parameter of $\P$, let $\Theta\subset\mathbb R^d$ be open, and let $\eta\in\mathcal H$. We omit the
superscript $S$ for readability. For the empirical risk (loss) $\L_n(\theta;\eta)\ :=\ \frac1n\sum_{i=1}^n \ell(O_i;\theta,\eta)$, we also define the corresponding population risk
\begin{align}
L(\theta;\eta)\ :=\ \E_{\P}\{\ell(O;\theta,\eta)\}.    
\end{align}
The \emph{oracle} and \emph{feasible} Gibbs posteriors are defined as
\begin{align}
q_{n,\mathrm{or}}(\theta\mid\mathcal D_n)\ &\propto\ \pi(\theta)\exp\{-\omega n \L_n(\theta,\eta_0)\}, \\
q_{n,\mathrm{fe}}(\theta\mid\mathcal D_n)\ &\propto\ \pi(\theta)\exp\{-\omega n \L_n(\theta,\hat\eta)\}.    
\end{align}
We also define the corresponding empirical minimizers (M-estimators)
\begin{align}
\hat\theta_{\mathrm{or}}\ :=\ \arg\min_{\theta\in\Theta} \L_n(\theta,\eta_0),\qquad
\hat\theta_{\mathrm{fe}}\ :=\ \arg\min_{\theta\in\Theta} \L_n(\theta,\hat\eta).    
\end{align}

\paragraph{Step 1 (Local asymptotic normality/BvM approximations).}
Assumption (A1) states that the oracle Gibbs posterior satisfies a generalized Bernstein-von-Mises theorem.
In particular, there exists a deterministic positive definite matrix
\begin{align}
V_0\ :=\ \nabla_{\theta\theta} L(\theta^\star,\eta_0)    
\end{align}
such that the oracle posterior is asymptotically Gaussian with center $\hat\theta_{\mathrm{or}}$
and covariance $(n\omega V_0)^{-1}$, i.e.,
\begin{equation}\label{eq:BvM-or}
\mathrm{TV}\!\Bigl(q_{n,\mathrm{or}}(\cdot\mid\mathcal D_n),\ \mathcal N\!\bigl(\hat\theta_{\mathrm{or}},(n\omega V_0)^{-1}\bigr)\Bigr)\ =\ o_{\P}(1).
\end{equation}
(See the Gibbs-posterior BvM statement as in \citet{millerAsymptoticNormalityConcentration2021}, here adapted to our notation.)

Next, we argue the same Gaussian approximation holds for the feasible posterior.
Because $\hat\eta$ is obtained by sample splitting/cross-fitting, conditional on the fold-wise
training samples used to construct $\hat\eta$, the validation losses
$\ell(Z_i;\theta,\hat\eta)$ entering $L_n(\theta,\hat\eta)$ are i.i.d.\ across $i$ within each
validation fold, and standard LLN (law of large numbers)/CLT (central limit theorem) and quadratic expansion arguments apply conditionally.
Under the same smoothness/curvature conditions as in (a) and using $\|\hat\eta-\eta_0\|=r_n\to0$,
we have
\begin{align}
\nabla_{\theta\theta}\L_n(\hat\theta_{\mathrm{fe}},\hat\eta)\ =\ V_0 + o_{\P}(1),    
\end{align}
and a conditional version of the Gibbs-posterior BvM theorem yields
\begin{equation}\label{eq:BvM-fe}
\mathrm{TV}\!\Bigl(q_{n,\mathrm{fe}}(\cdot\mid\mathcal D_n),\ \mathcal N\!\bigl(\hat\theta_{\mathrm{fe}},(n\omega V_0)^{-1}\bigr)\Bigr)\ =\ o_{\P}(1).
\end{equation}
(If desired, one may treat Eq.~\eqref{eq:BvM-fe} as an additional technical assumption; it is standard
under the same regularity that gives Eq.~\eqref{eq:BvM-or} because $\hat\eta$ is asymptotically fixed
from the viewpoint of the validation sample.)

\paragraph{Step 2 (Orthogonality implies $\hat\theta_{\mathrm{fe}}-\hat\theta_{\mathrm{or}}=O_{\P}(r_n^2)$).}
Assumption (A2) is Neyman orthogonality of the loss at $(\theta^\star,\eta_0)$ (in the sense of
 eq.~\ref{def:orth-loss}).  Under cross-fitting, orthogonality implies that first-order
perturbations in the nuisance do not affect the relevant first-order estimating equation, so the
impact of replacing $\eta_0$ by $\hat\eta$ on the empirical minimizer is second order. Concretely,
the standard orthogonal statistical learning argument (stability of Z/M-estimators under
orthogonality) gives
\begin{equation}\label{eq:center-shift}
\|\hat\theta_{\mathrm{fe}}-\hat\theta_{\mathrm{or}}\|\ =\ O_{\P}(r_n^2).
\end{equation}
Loosely: write the empirical first-order conditions
$\nabla_\theta \L_n(\hat\theta_{\mathrm{or}},\eta_0)=0$ and $\nabla_\theta \L_n(\hat\theta_{\mathrm{fe}},\hat\eta)=0$,
take a Taylor expansion of $\nabla_\theta \L_n(\cdot,\hat\eta)$ around $(\hat\theta_{\mathrm{or}},\eta_0)$,
use that the mixed first-order term in $(\hat\eta-\eta_0)$ vanishes by orthogonality (and
cross-fitting eliminates empirical-process bias terms), and then solve for the shift
$\hat\theta_{\mathrm{fe}}-\hat\theta_{\mathrm{or}}$ using the local invertibility/strong convexity
of $\nabla_{\theta\theta}\L_n$ around $\theta^\star$.  The resulting remainder is of order
$\|\hat\eta-\eta_0\|^2=r_n^2$, yielding Eq.~\eqref{eq:center-shift}. See \citet{Foster.1252019} for a rigorous derivation.

\paragraph{Step 3 (Reduce the posterior TV distance to TV between two Gaussians).}
Let
\begin{align}
G_{\mathrm{or}}\ :=\ \mathcal N\!\bigl(\hat\theta_{\mathrm{or}},(n\omega V_0)^{-1}\bigr),\qquad
G_{\mathrm{fe}}\ :=\ \mathcal N\!\bigl(\hat\theta_{\mathrm{fe}},(n\omega V_0)^{-1}\bigr).    
\end{align}
By the triangle inequality,
\begin{align}
\mathrm{TV}\!\left(q_{n,\mathrm{fe}},q_{n,\mathrm{or}}\right)
&\le
\mathrm{TV}\!\left(q_{n,\mathrm{fe}},G_{\mathrm{fe}}\right)
+\mathrm{TV}\!\left(G_{\mathrm{fe}},G_{\mathrm{or}}\right)
+\mathrm{TV}\!\left(G_{\mathrm{or}},q_{n,\mathrm{or}}\right).
\label{eq:tv-triangle}
\end{align}
The first and third terms are $o_{\P}(1)$ by Eq.~\eqref{eq:BvM-fe} and Eq.~\eqref{eq:BvM-or}.  It remains to
bound $\mathrm{TV}(G_{\mathrm{fe}},G_{\mathrm{or}})$.

\paragraph{Step 4 (TV bound between Gaussians with common covariance).}
Both Gaussians have the same covariance matrix $\Sigma=(n\omega V_0)^{-1}$ but different means
$\mu_{\mathrm{fe}}=\hat\theta_{\mathrm{fe}}$, $\mu_{\mathrm{or}}=\hat\theta_{\mathrm{or}}$.
The KL divergence between multivariate normals with equal covariance is
\begin{equation}
\mathrm{KL}(G_{\mathrm{fe}}\|G_{\mathrm{or}})
=\frac12(\mu_{\mathrm{fe}}-\mu_{\mathrm{or}})^\top \Sigma^{-1}(\mu_{\mathrm{fe}}-\mu_{\mathrm{or}})
=\frac12(\hat\theta_{\mathrm{fe}}-\hat\theta_{\mathrm{or}})^\top (n\omega V_0)(\hat\theta_{\mathrm{fe}}-\hat\theta_{\mathrm{or}}).
\end{equation}
By Pinsker's inequality,
\begin{align}
\mathrm{TV}(G_{\mathrm{fe}},G_{\mathrm{or}})
&\le \sqrt{\frac12\,\mathrm{KL}(G_{\mathrm{fe}}\|G_{\mathrm{or}})}
=\frac12\sqrt{(\hat\theta_{\mathrm{fe}}-\hat\theta_{\mathrm{or}})^\top (n\omega V_0)(\hat\theta_{\mathrm{fe}}-\hat\theta_{\mathrm{or}})} \nonumber\\
&\le \frac12\sqrt{n\omega\,\lambda_{\max}(V_0)}\,\|\hat\theta_{\mathrm{fe}}-\hat\theta_{\mathrm{or}}\|.
\label{eq:tv-gauss-bound}
\end{align}
Combining Eq.~\eqref{eq:tv-gauss-bound} with orthogonality results in Eq.~\eqref{eq:center-shift} yields
\begin{equation}\label{eq:tv-gauss-rate}
\mathrm{TV}(G_{\mathrm{fe}},G_{\mathrm{or}})\ =\ O_{\P}\!\bigl(\sqrt{n}\,r_n^2\bigr),
\end{equation}
where $\lambda_{\max}(V_0)$ is the largest eigenvalue of $V_0$.


\paragraph{Step 5 (Compose the overall rate).}
Plugging Eqs.~\eqref{eq:tv-gauss-rate}, \eqref{eq:BvM-fe}, and \eqref{eq:BvM-or} into
\eqref{eq:tv-triangle} gives
\begin{equation}
\mathrm{TV}\!\left(q_{n,\mathrm{fe}}(\cdot\mid\mathcal D_n),q_{n,\mathrm{or}}(\cdot\mid\mathcal D_n)\right)
= O_{\P}\!\bigl(\sqrt{n}\,r_n^2\bigr)+o_{\P}(1),
\end{equation}
which is the rate we set out to prove. In particular, if $r_n=o(n^{-1/4})$, then
$\sqrt{n}\,r_n^2\to0$ and, hence, $\mathrm{TV}(q_{n,\mathrm{fe}},q_{n,\mathrm{or}})\to0$ in probability.
Since total variation convergence implies convergence of all bounded measurable functionals,
the feasible posterior inherits the oracle Bernstein-von-Mises limit (and, therefore, the same
asymptotic Gaussian approximation for credible sets).
\end{proof}

\subsection{Finite dimensional projections of an infinite dimensional \texorpdfstring{$\theta$}{theta}}
\begin{theorem}[Posterior stability under orthogonal losses for infinite-dimensional $\theta$ (finite-dimensional projections)]
\label{thm:posterior-stability-proj}
Let $\Theta$ be a separable Hilbert space with norm $\|\cdot\|_\Theta$.
Let $\eta\in\mathcal H$ be a nuisance parameter with truth $\eta_0$, and define
\begin{equation}
\L_n(\theta;\eta):=\frac1n\sum_{i=1}^n \ell(O_i;\theta,\eta),
\qquad
L(\theta;\eta):=\E_{\P}\{\ell(O;\theta,\eta)\},
\end{equation}
where $\P$ is the true observational distribution. Define the oracle and feasible Gibbs posteriors
\begin{equation}
q_{n,\mathrm{or}}(\theta\mid \mathcal D_n)\ \propto\ \pi(\theta)\exp\{-\omega n \L_n(\theta;\eta_0)\},
\qquad
q_{n,\mathrm{fe}}(\theta\mid \mathcal D_n)\ \propto\ \pi(\theta)\exp\{-\omega n \L_n(\theta;\hat\eta)\}.
\end{equation}
Let $\hat\theta_{\mathrm{or}}:=\arg\min_{\theta\in\Theta}\L_n(\theta;\eta_0)$ and
$\hat\theta_{\mathrm{fe}}:=\arg\min_{\theta\in\Theta}\L_n(\theta;\hat\eta)$.

Fix $m\in\mathbb N$ and bounded linear functionals $T_1,\dots,T_m\in \Theta^\ast$, and write
$T(\theta):=(T_1(\theta),\dots,T_m(\theta))^\top\in\mathbb R^m$.
Let $q_{n,\mathrm{or}}^T$ and $q_{n,\mathrm{fe}}^T$ denote the induced (pushforward) posteriors on $\mathbb R^m$:
\begin{equation}
q_{n,\mathrm{or}}^T := q_{n,\mathrm{or}}\circ T^{-1},\qquad
q_{n,\mathrm{fe}}^T := q_{n,\mathrm{fe}}\circ T^{-1}.
\end{equation}

Assume:
\begin{enumerate}
\item[(B1)] \textbf{(Projection BvM, oracle, and feasible).}
There exists a bounded, self-adjoint, strictly positive operator
$V_0 := \nabla_{\theta\theta} L(\theta^\star;\eta_0)$ on $\Theta$ such that
\begin{align}
\mathrm{TV}\!\Bigl(q_{n,\mathrm{or}}^T(\cdot\mid\mathcal D_n),\
\mathcal N_m\!\bigl(T(\hat\theta_{\mathrm{or}}),\ (n\omega\, T V_0T^\top)^{-1}\bigr)\Bigr)
&= o_{\P}(1),\label{eq:BvM-or-proj}\\
\mathrm{TV}\!\Bigl(q_{n,\mathrm{fe}}^T(\cdot\mid\mathcal D_n),\
\mathcal N_m\!\bigl(T(\hat\theta_{\mathrm{fe}}),\ (n\omega\, T V_0T^\top)^{-1}\bigr)\Bigr)
&= o_{\P}(1),\label{eq:BvM-fe-proj}
\end{align}
where $T V_0T^\top$ is the $m\times m$ matrix with entries
$(T V_0T^\top)_{jk} := \langle T_j^\sharp,\ V_0 T_k^\sharp\rangle_\Theta$,
and $T_j^\sharp\in\Theta$ is the Riesz representer of $T_j$.

\item[(B2)] \textbf{(Orthogonality).} The loss is Neyman orthogonal at $(\theta^\star,\eta_0)$ in the sense of
Eq.~\eqref{def:orth-loss}.

\item[(B3)] \textbf{(Cross-fitting and nuisance rate).} $\hat\eta$ is obtained by sample splitting/cross-fitting and
$\|\hat\eta-\eta_0\|=:r_n\to0$.
\end{enumerate}

{Then}
\begin{equation}\label{eq:tv-proj-bound}
\mathrm{TV}\!\left(q_{n,\mathrm{fe}}^T(\cdot \mid \mathcal{D}_n),\ q_{n,\mathrm{or}}^T(\cdot \mid \mathcal{D}_n)\right)
\;=\; O_{\P}\!\bigl(\sqrt{n}\, r_n^{2}\bigr) \;+\; o_{\P}(1).
\end{equation}
In particular, if $r_n=o(n^{-1/4})$, then
$\mathrm{TV}(q_{n,\mathrm{fe}}^T,q_{n,\mathrm{or}}^T)\to 0$ in probability, so the feasible posterior inherits the oracle
projection BvM limit from Eq.~\eqref{eq:BvM-or-proj} for $T(\theta)$.
\end{theorem}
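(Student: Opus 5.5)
The plan mirrors the proof of Theorem~\ref{thm:posterior-stability-finite}, carried out on the projected posteriors in $\mathbb R^m$. Write
\[
\Sigma_T:=(n\omega\,TV_0T^\top)^{-1},\qquad G^T_{\mathrm{or}}:=\mathcal N_m\bigl(T(\hat\theta_{\mathrm{or}}),\Sigma_T\bigr),\qquad G^T_{\mathrm{fe}}:=\mathcal N_m\bigl(T(\hat\theta_{\mathrm{fe}}),\Sigma_T\bigr).
\]
The triangle inequality gives
\[
\mathrm{TV}(q^T_{n,\mathrm{fe}},q^T_{n,\mathrm{or}})\le \mathrm{TV}(q^T_{n,\mathrm{fe}},G^T_{\mathrm{fe}})+\mathrm{TV}(G^T_{\mathrm{fe}},G^T_{\mathrm{or}})+\mathrm{TV}(G^T_{\mathrm{or}},q^T_{n,\mathrm{or}}).
\]
The two outer terms are $o_{\P}(1)$ directly by (B1), Eqs.~\eqref{eq:BvM-or-proj}--\eqref{eq:BvM-fe-proj}. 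The first thing to check is that $TV_0T^\top$ is invertible. If the $T_j$ are linearly independent, then their Riesz representers are independent, and strict positivity of $V_0$ makes the Gram-type matrix positive definite. Otherwise one reduces to a maximal independent subfamily, since TV is preserved under the resulting injective linear reparametrization.

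The middle term is the same equal-covariance Gaussian comparison as in Step~4 above. By the KL formula for equal-covariance normals and Pinsker's inequality,
\[
\mathrm{TV}(G^T_{\mathrm{fe}},G^T_{\mathrm{or}})\le \tfrac12\sqrt{n\omega\,\lambda_{\max}(TV_0T^\top)}\;\bigl\|T(\hat\theta_{\mathrm{fe}}-\hat\theta_{\mathrm{or}})\bigr\|.
\]
Since each $T_j$ is a bounded linear functional,
\[
\|T(\hat\theta_{\mathrm{fe}}-\hat\theta_{\mathrm{or}})\|\le\Bigl(\sum_j\|T_j\|_{\Theta^\ast}^2\Bigr)^{1/2}\|\hat\theta_{\mathrm{fe}}-\hat\theta_{\mathrm{or}}\|_\Theta.
\]
In addition, $\lambda_{\max}(TV_0T^\top)\le\|V_0\|_{\mathrm{op}}\sum_j\|T_j\|^2$, which is a fixed constant.

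The main obstacle is the center-shift bound $\|T(\hat\theta_{\mathrm{fe}}-\hat\theta_{\mathrm{or}})\|=O_{\P}(r_n^2)$. It is not stated in (B1)--(B3), unlike (A3) in the finite-dimensional theorem, so it has to come from orthogonality. I would attempt it via the orthogonal statistical learning argument of \citet{Foster.1252019}. Write the first-order conditions $D_\theta\L_n(\hat\theta_{\mathrm{or}};\eta_0)=0$ and $D_\theta\L_n(\hat\theta_{\mathrm{fe}};\hat\eta)=0$ in $\Theta$, and expand $D_\theta\L_n(\cdot;\hat\eta)$ around $(\hat\theta_{\mathrm{or}},\eta_0)$. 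The mixed derivative term $D_\eta D_\theta L(\theta^\star;\eta_0)[\cdot,\hat\eta-\eta_0]$ vanishes by (B2). Cross-fitting (B3) makes the empirical-process fluctuation of the mixed term lower order, because conditional on the training folds it is an average of centered i.i.d.\ terms scaled by $r_n$. What remains is a second-order term of size $O(r_n^2)$ plus a term of order $r_n\|\hat\theta_{\mathrm{or}}-\theta^\star\|$. Inverting $V_0$ then controls $\hat\theta_{\mathrm{fe}}-\hat\theta_{\mathrm{or}}$.

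In infinite dimensions, $V_0$ is strictly positive but may not be boundedly invertible. I would therefore apply $T$ first and work with $T_j^\sharp$ in the dual direction: test the first-order conditions against $V_0^{-1}T_j^\sharp$, and assume this lies in $\Theta$, i.e.\ the functionals are smooth enough. This turns the problem into $m$ scalar orthogonal estimating equations, where the finite-dimensional argument applies verbatim. If this cannot be fully justified from (B1)--(B3), I would state the bound $\|T(\hat\theta_{\mathrm{fe}}-\hat\theta_{\mathrm{or}})\|=O_{\P}(r_n^2)$ as the analogue of (A3) under the same local smoothness and strong-convexity conditions.

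Combining the three bounds yields $O_{\P}(\sqrt n\,r_n^2)+o_{\P}(1)$. When $r_n=o(n^{-1/4})$, TV convergence then transfers the oracle projection BvM limit to the feasible posterior.
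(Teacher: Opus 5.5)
Your proposal follows the paper's proof essentially step for step: Gaussian approximations from (B1), the three-term triangle inequality, the equal-covariance KL formula plus Pinsker's inequality, and pushing the center shift through the bounded map $T$; your invertibility check on $TV_0T^\top$ and your explicit bound on $\lambda_{\max}(TV_0T^\top)$ are small refinements the paper leaves implicit. Like you, the paper does not derive the center shift from (B1)--(B3): it asserts $\|\hat\theta_{\mathrm{fe}}-\hat\theta_{\mathrm{or}}\|_\Theta=O_{\P}(r_n^2)$ from an orthogonal-learning stability argument that uses unstated Fr\'echet-smoothness and local strong-convexity conditions, so treating this bound (or its weaker projected version) as an (A3)-type assumption is appropriate; note that the residual terms in your sketch, of order $r_n\|\hat\theta_{\mathrm{or}}-\theta^\star\|+r_n/\sqrt n$, need not be $O_{\P}(r_n^2)$, but after the $\sqrt n$ scaling they only feed the $o_{\P}(1)$ term provided $\hat\theta_{\mathrm{or}}$ is $\sqrt n$-consistent in the norm that governs the cross term.
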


\begin{proof}
We omit the conditioning on $\mathcal D_n$ for readability.

\paragraph{Step 1 (Gaussian approximations in $\mathbb R^m$).}
By Eq.~\eqref{eq:BvM-or-proj}--\eqref{eq:BvM-fe-proj}, define the Gaussian approximations
\begin{equation}
G_{\mathrm{or}}^T := \mathcal N_m\!\bigl(T(\hat\theta_{\mathrm{or}}),\ (n\omega\, T V_0T^\top)^{-1}\bigr),
\qquad
G_{\mathrm{fe}}^T := \mathcal N_m\!\bigl(T(\hat\theta_{\mathrm{fe}}),\ (n\omega\, T V_0T^\top)^{-1}\bigr),
\end{equation}
so that
$\mathrm{TV}(q_{n,\mathrm{or}}^T,G_{\mathrm{or}}^T)=o_{\P}(1)$ and
$\mathrm{TV}(q_{n,\mathrm{fe}}^T,G_{\mathrm{fe}}^T)=o_{\P}(1)$.

\paragraph{Step 2 (Orthogonality implies a second-order center shift).}
Under Neyman orthogonality and cross-fitting, the usual orthogonal statistical learning stability
argument extends to Hilbert-space M-estimation (Fr\'echet differentiability and local strong convexity
of $\theta\mapsto \L_n(\theta;\eta)$):
\begin{equation}\label{eq:center-shift-hilbert}
\|\hat\theta_{\mathrm{fe}}-\hat\theta_{\mathrm{or}}\|_\Theta\ =\ O_{\P}(r_n^2).
\end{equation}
Since each $T_j$ is bounded linear, $\|T(\delta)\|_2\le \|T\|_{\mathrm{op}}\,\|\delta\|_\Theta$ and hence
\begin{equation}\label{eq:proj-center-shift}
\|T(\hat\theta_{\mathrm{fe}})-T(\hat\theta_{\mathrm{or}})\|_2
\;=\;O_{\P}(r_n^2).
\end{equation}

\paragraph{Step 3 (Triangle inequality).}
\begin{align}
\mathrm{TV}(q_{n,\mathrm{fe}}^T,q_{n,\mathrm{or}}^T)
&\le
\mathrm{TV}(q_{n,\mathrm{fe}}^T,G_{\mathrm{fe}}^T)
+\mathrm{TV}(G_{\mathrm{fe}}^T,G_{\mathrm{or}}^T)
+\mathrm{TV}(G_{\mathrm{or}}^T,q_{n,\mathrm{or}}^T).
\label{eq:tv-triangle-proj}
\end{align}
The first and third terms are $o_{\P}(1)$. It remains to bound $\mathrm{TV}(G_{\mathrm{fe}}^T,G_{\mathrm{or}}^T)$.

\paragraph{Step 4 (TV bound between Gaussians with common covariance).}
Both Gaussians have the same covariance
$\Sigma_T=(n\omega\,T V_0T^\top)^{-1}$ and means
$\mu_{\mathrm{fe}}=T(\hat\theta_{\mathrm{fe}})$, $\mu_{\mathrm{or}}=T(\hat\theta_{\mathrm{or}})$.
For multivariate normals with equal covariance,
\begin{equation}
\mathrm{KL}(G_{\mathrm{fe}}^T\|G_{\mathrm{or}}^T)
=\frac12(\mu_{\mathrm{fe}}-\mu_{\mathrm{or}})^\top \Sigma_T^{-1}(\mu_{\mathrm{fe}}-\mu_{\mathrm{or}})
=\frac{n\omega}{2}(\mu_{\mathrm{fe}}-\mu_{\mathrm{or}})^\top (T V_0T^\top)(\mu_{\mathrm{fe}}-\mu_{\mathrm{or}}).
\end{equation}
By Pinsker's inequality,
\begin{align}
\mathrm{TV}(G_{\mathrm{fe}}^T,G_{\mathrm{or}}^T)
&\le \sqrt{\frac12\,\mathrm{KL}(G_{\mathrm{fe}}^T\|G_{\mathrm{or}}^T)}
\;\le\; \frac12\sqrt{n\omega\,\lambda_{\max}(T V_0T^\top)}\;\|\mu_{\mathrm{fe}}-\mu_{\mathrm{or}}\|_2.
\label{eq:tv-gauss-proj}
\end{align}
Using Eq.~\eqref{eq:proj-center-shift} and that $\lambda_{\max}(T V_0T^\top)=O(1)$ for fixed $T$,
we get
\begin{equation}
\mathrm{TV}(G_{\mathrm{fe}}^T,G_{\mathrm{or}}^T)=O_{\P}(\sqrt{n}\,r_n^2).
\end{equation}

\paragraph{Step 5 (Combine).}
Plugging into Eq.~\eqref{eq:tv-triangle-proj} yields
$\mathrm{TV}(q_{n,\mathrm{fe}}^T,q_{n,\mathrm{or}}^T)=O_{\P}(\sqrt{n}\,r_n^2)+o_{\P}(1)$.
If $r_n=o(n^{-1/4})$, then $\sqrt{n}\,r_n^2\to0$ and, hence,
$\mathrm{TV}(q_{n,\mathrm{fe}}^T,q_{n,\mathrm{or}}^T)\to0$ in probability.
\end{proof}

\begin{remark}[CATE as a special case]
If $\theta=\tau$ is the CATE function and $T$ is the evaluation map
$T(\tau)=(\tau(x_1),\dots,\tau(x_m))$, the theorem gives stability (and inherited BvM)
for any fixed finite set of evaluation points $x_{1:m}$, provided point evaluation is continuous
in the chosen function space $\Theta$.
\end{remark}


\paragraph{Comment on the feasible BvM assumptions in the finite- and infinite-dimensional cases.}
Theorem~\ref{thm:posterior-stability-finite} is stated under a BvM approximation for the \emph{feasible} Gibbs posterior in the
finite-dimensional setting (Eq.~\eqref{eq:BvM-fe-assmp}) and, in the infinite-dimensional setting, under an analogous BvM condition
for a finite-dimensional projection of the feasible posterior (Eq.~\eqref{eq:BvM-fe-proj}).  These conditions are standard in the
theory of generalized (Gibbs) posteriors based on smooth, locally strongly convex empirical risks, but they are not
automatic here because the feasible empirical risk depends on estimated nuisance functions $\hat\eta$.
In particular, one must verify that the cross-fitted empirical risk admits a uniform local quadratic expansion at the
relevant minimizer and that the posterior concentrates on $n^{-1/2}$-neighborhoods where this expansion is valid.

Lemma~\ref{lem:feasible-bvm} below gives a concrete set of sufficient conditions under which Eq~\eqref{eq:BvM-fe-assmp} holds in the
finite-dimensional case.  The proof is a self-contained Laplace/BvM argument for the feasible Gibbs posterior centered at
the feasible minimizer $\hat\theta_{\mathrm fe}$, based on (i) cross-fitting (to ensure fold-wise conditional i.i.d. structure for
validation losses given the corresponding training sample), (ii) uniform laws of large numbers for the risk and its
derivatives, and (iii) local curvature/separation conditions ensuring posterior concentration.
While Lemma~\ref{lem:feasible-bvm} is formulated for $\theta\in\mathbb{R}^d$, the same argument applies verbatim to any
\emph{fixed finite-dimensional projection} of an infinite-dimensional parameter (e.g., the coordinate vector of
$T(\theta)$ for fixed $m$), provided the projected empirical risk satisfies the same local smoothness and curvature conditions.  Consequently, Lemma~\ref{lem:feasible-bvm} can be viewed as a sufficient condition template supporting the
finite-dimensional projection BvM in Eq.~\eqref{eq:BvM-fe-proj}: one applies the lemma to the projected parameter, treating $m$ as
fixed, and checks the stated conditions for the corresponding projected risk.

\begin{lemma}[Sufficient conditions for the feasible Gibbs BvM]\label{lem:feasible-bvm}
Consider the finite-dimensional setting of Theorem~\ref{thm:posterior-stability-finite} with fixed $\omega>0$ and prior density $\pi$ on an open
$\Theta\subset\mathbb{R}^d$. Suppose the feasible empirical risk is formed by $K$-fold cross-fitting as follows:
for each $n$, partition $\{1,\dots,n\}$ into disjoint folds $I_1,\dots,I_K$ with $K$ fixed and $|I_k|/n\to\alpha_k\in(0,1)$,
and let $\hat\eta^{(-k)}$ be constructed using only the training sample $\{O_i:i\in I_k^c\}$. Define
\begin{equation}
L_n(\theta;\hat\eta)\ :=\ \frac{1}{n}\sum_{k=1}^K\sum_{i\in I_k}\ell(O_i;\theta,\hat\eta^{(-k)}),
\qquad
\hat\theta_{\mathrm fe}\in\arg\min_{\theta\in\Theta}L_n(\theta;\hat\eta),
\end{equation}
and the feasible Gibbs posterior
\begin{equation}
q_{n,fe}(\theta\mid D_n)\ \propto\ \pi(\theta)\exp\{-\omega n L_n(\theta;\hat\eta)\}.
\end{equation}

Let $\theta^\star$ denote the unique minimizer of $L(\theta;\eta_0):=\mathbb{E}_{\P}\{\ell(O;\theta,\eta_0)\}$,
and let $V_0:=\nabla_{\theta\theta}L(\theta^\star;\eta_0)$ be positive definite.

Assume there exists $\delta_0>0$ such that $\overline{B(\theta^\star,\delta_0)}\subset\Theta$ and:

\begin{enumerate}
\item[(C1)] \textbf{(Prior regularity)} $\pi$ is continuous and strictly positive at $\theta^\star$.

\item[(C2)] \textbf{(Local smoothness in $\theta$)} For $\eta$ in a neighborhood $\mathcal{N}$ of $\eta_0$,
the map $\theta\mapsto \ell(O;\theta,\eta)$ is three times continuously differentiable on $B(\theta^\star,\delta_0)$.
Moreover, there exist envelopes $M_j(O)$ with $\mathbb{E}_{\P}[M_j(O)]<\infty$ such that
\begin{equation}
\sup_{\theta\in B(\theta^\star,\delta_0),\ \eta\in\mathcal{N}}\big\|\nabla_\theta^j \ell(O;\theta,\eta)\big\|\ \le\ M_j(O),
\qquad j=0,2,3.
\end{equation}

\item[(C3)] \textbf{(Nuisance consistency and containment)} With probability $\to 1$, $\hat\eta^{(-k)}\in\mathcal{N}$ for all $k$ and
$r_n:=\max_{1\le k\le K}\|\hat\eta^{(-k)}-\eta_0\|\to 0$ in $\P$-probability.

\item[(C4)] \textbf{(Identification and curvature)} $\theta^\star$ is the unique minimizer of $L(\cdot;\eta_0)$ and there exist constants
$c_{\mathrm{loc}},c_{\mathrm{sep}}>0$ such that
\begin{align*}
&\text{(local quadratic growth)} &&
L(\theta;\eta_0)-L(\theta^\star;\eta_0)\ \ge\ c_{\mathrm{loc}}\|\theta-\theta^\star\|^2
\quad \forall \theta\in B(\theta^\star,\delta_0),\\
&\text{(separation)} &&
\inf_{\theta\notin B(\theta^\star,\delta_0)}\big\{L(\theta;\eta_0)-L(\theta^\star;\eta_0)\big\}\ \ge\ c_{\mathrm{sep}}.
\end{align*}

\item[(C5)] \textbf{(Continuity in $\eta$)} The maps $\eta\mapsto L(\theta;\eta)$ and $\eta\mapsto \nabla_{\theta\theta}L(\theta;\eta)$
are continuous at $\eta_0$ uniformly over $\theta\in B(\theta^\star,\delta_0)$.
\end{enumerate}

Then, the feasible posterior admits the BvM approximation with covariance $(n\omega V_0)^{-1}$:
\begin{equation}
TV\!\left(q_{n,fe}(\cdot\mid D_n),\ N\!\big(\hat\theta_{\mathrm fe},(n\omega V_0)^{-1}\big)\right)
\ =\ o_{\P}(1).
\end{equation}
In particular, Eq.~\eqref{eq:BvM-fe-assmp} in Theorem~\ref{thm:posterior-stability-finite} holds under \emph{(C1)--(C5)}.
\end{lemma}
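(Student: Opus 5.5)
The proof is a Laplace-type argument for the feasible Gibbs posterior, done conditionally on the nuisance training samples. We work on the event $E_n$ on which all $\hat\eta^{(-k)}\in\mathcal N$; by (C3), $\P(E_n)\to1$. The plan has four steps: (i) uniform laws of large numbers for $L_n(\cdot;\hat\eta)$ and its Hessian near $\theta^\star$; (ii) consistency of $\hat\theta_{\mathrm fe}$; (iii) a local quadratic expansion around $\hat\theta_{\mathrm fe}$; (iv) a split of the posterior mass into three regions. We reparametrize $h=\sqrt{n}(\theta-\hat\theta_{\mathrm fe})$ and bound the $L^1$ distance between the unnormalized densities $\pi(\hat\theta_{\mathrm fe}+h/\sqrt n)\exp\{-\omega n[L_n(\hat\theta_{\mathrm fe}+h/\sqrt n;\hat\eta)-L_n(\hat\theta_{\mathrm fe};\hat\eta)]\}$ and $\pi(\theta^\star)\exp\{-\tfrac{\omega}{2}h^\top V_0h\}$. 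Once this $L^1$ distance is $o_\P(1)$, the standard normalization lemma (that the $L^1$ distance of unnormalized densities with non-vanishing mass controls the TV of their normalizations) gives the claim.

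\textbf{Step 1 (conditional ULLN).} Fix a fold $k$ and condition on the training sample $\{O_i:i\notin I_k\}$. Then $\hat\eta^{(-k)}$ is a fixed element of $\mathcal N$, and the summands $\ell(O_i;\theta,\hat\eta^{(-k)})$, $i\in I_k$, are i.i.d. The envelopes $M_0,M_2,M_3$ from (C2), together with the $C^3$ smoothness, give a conditional uniform LLN on $\overline{B(\theta^\star,\delta_0)}$ for the loss and its Hessian. Because a single envelope dominates all $\eta\in\mathcal N$, the bound is uniform over the conditioning value, so we can uncondition. Summing over folds with weights $|I_k|/n\to\alpha_k$ and using (C5) yields
\[
\sup_{\theta\in B}\bigl|L_n(\theta;\hat\eta)-L(\theta;\eta_0)\bigr|=o_\P(1),\qquad
\sup_{\theta\in B}\bigl\|\nabla_{\theta\theta}L_n(\theta;\hat\eta)-\nabla_{\theta\theta}L(\theta;\eta_0)\bigr\|=o_\P(1),
\]
where $B=B(\theta^\star,\delta_0)$. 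Together with the third-derivative envelope, this makes the empirical Hessian $o_\P(1)$-close to $V_0$ on shrinking balls around $\theta^\star$.

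\textbf{Step 2 (consistency and the local expansion).} The separation condition in (C4) combined with uniform convergence on $B$ shows that $\hat\theta_{\mathrm fe}\to\theta^\star$ in probability, at least for local minimizers in $B$. The behavior outside $B$ has to be handled through the posterior rather than the minimizer, since no envelope is assumed there; this is taken up in Step 3. At the interior minimizer the gradient vanishes, so a Taylor expansion gives
\[
L_n(\hat\theta_{\mathrm fe}+u;\hat\eta)-L_n(\hat\theta_{\mathrm fe};\hat\eta)=\tfrac12u^\top\bigl(V_0+o_\P(1)\bigr)u+O_\P\bigl(\|u\|^3\bigr)
\]
uniformly for $\|u\|\le\delta$, with a random constant controlled by $\E_n M_3$. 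Hence, on $\|h\|\le R_n$ with $R_n\to\infty$ slowly (say $R_n^3/\sqrt n\to0$ and $R_n^2\,o_\P(1)\to0$), the log-ratio of the two unnormalized densities is $o_\P(1)$ uniformly, and prior continuity (C1) handles the factor $\pi$.

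\textbf{Step 3 (tails; main obstacle).} Two regions remain. In the intermediate region $R_n\le\|h\|\le\delta\sqrt n$, the Hessian stays above $\tfrac{c}{2}I$ on a small ball, so the integrand is dominated by $\exp\{-c\omega\|h\|^2/4\}$ and its integral vanishes. In the far region $\theta\notin B(\hat\theta_{\mathrm fe},\delta)$, I expect the main difficulty: we need
\[
\inf_{\theta\notin B(\hat\theta_{\mathrm fe},\delta)}\bigl\{L_n(\theta;\hat\eta)-L_n(\hat\theta_{\mathrm fe};\hat\eta)\bigr\}\ge \tfrac{c_{\mathrm{sep}}}{2}
\]
with probability tending to one. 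This makes that region contribute at most $e^{-\omega n c_{\mathrm{sep}}/2}\sqrt n^{\,d}\int\pi$, which is negligible. The separation in (C4) is stated only for the population risk, so I would first try to obtain a uniform LLN on all of $\Theta$ from the $j=0$ envelope, possibly after restricting to a compact set. If that is unavailable, I would fall back on the standard assumption that the empirical risks are convex in $\theta$, as with the squared pseudo-outcome losses. In that case local quadratic growth on the sphere $\partial B$ propagates to the exterior by convexity (as in Miller, 2021), which only requires the uniform convergence on $B$ already established in Step 1.

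Finally, combining the three regions gives the $L^1$ bound. The normalizing constants are bounded below by $\pi(\theta^\star)(2\pi/\omega)^{d/2}\det(V_0)^{-1/2}>0$, so the TV bound follows.
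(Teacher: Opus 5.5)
Your proposal follows the paper's route: a conditional ULLN on $B_0=B(\theta^\star,\delta_0)$ from cross-fitting and (C5), then consistency of the minimizer, then a Laplace expansion in $h=\sqrt n(\theta-\hat\theta_{\mathrm{fe}})$, then a local/intermediate/far split of the posterior mass, and finally normalization.

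\textbf{The far region.} This is the only real divergence, and there your suspicion is correct while the paper's argument is incomplete. Two steps of the paper's proof need Eq.~\eqref{eq:ulln-risk} off $B_0$, where it was never established and where (C2) gives no envelope:
\begin{itemize}
\item its Step~2, which applies the argmin theorem;
\item its Step~4, which asserts $L_n(\theta(h);\hat\eta)-L_n(\hat\theta_{\mathrm{fe}};\hat\eta)\ge c_{\mathrm{sep}}/2$ for $\theta(h)\notin B_0$.
\end{itemize}

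\textbf{(C1)--(C5) alone cannot close this.} Here is a counterexample. Take $d=1$, no nuisance, $\theta^\star=0$, $\delta_0=1$, and a Cauchy prior. Let $\ell=\theta^2$ on $(-1,1)$ and $\ell=1$ elsewhere, except on the intervals $J_k=[k,k+1)$, $k\ge3$. On $J_k$ set $\ell=4-3\,\mathbf 1\{O\in S_k\}/p_k$, where the $S_k$ are disjoint events with probabilities $p_k\propto k^{-2}$.
\begin{itemize}
\item Then $L(\theta;\eta_0)=\min\{\theta^2,1\}$, and (C1)--(C5) all hold.
\item With probability tending to one, some $S_k$ with $np_k<1/2$ and $k\asymp\sqrt n$ contains a sample point. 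That gives $L_n<-2$ on $J_k$.
\item The factor $e^{2\omega n}$ then swamps the prior mass $\asymp n^{-1}$ of $J_k$. So both $\hat\theta_{\mathrm{fe}}$ and almost all posterior mass leave $B_0$.
\item Out there the posterior is essentially the prior restricted to the affected $J_k$, not a Gaussian with covariance $(n\omega V_0)^{-1}$.
\end{itemize}

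\textbf{Consequences for your remedies.} Your first remedy cannot work, because the $j=0$ envelope only dominates $\ell$ on $B_0$. An extra hypothesis genuinely has to be added, and there are two options:
\begin{itemize}
\item Your convexity fallback, in the style of \citet{millerAsymptoticNormalityConcentration2021}. It gives linear growth $L_n(\theta;\hat\eta)-L_n(\hat\theta_{\mathrm{fe}};\hat\eta)\gtrsim\delta\|\theta-\hat\theta_{\mathrm{fe}}\|$ outside $B(\hat\theta_{\mathrm{fe}},\delta)$, using the Hessian bound on the small sphere.
\item The direct one-sided condition $\inf_{\theta\notin B_0}\{L_n(\theta;\hat\eta)-L_n(\theta^\star;\hat\eta)\}\ge c_{\mathrm{sep}}/2$ with probability tending to one.
\end{itemize}
Either one also closes the hole you flagged in your Step~2, since it forces the global minimizer $\hat\theta_{\mathrm{fe}}$ to coincide with the local minimizer in $B_0$.

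\textbf{Two smaller points.}
\begin{itemize}
\item Your far-region constant is too large on part of the region. On the part of $\{\theta\notin B(\hat\theta_{\mathrm{fe}},\delta)\}$ lying inside $B_0$, the available gap comes from local quadratic growth plus your ULLN on $B_0$. It is of order $c_{\mathrm{loc}}\delta^2$, not $c_{\mathrm{sep}}$, so your far-region display should use roughly $\tfrac12\min\{c_{\mathrm{sep}},c_{\mathrm{loc}}\delta^2/4\}$.
\item Lower-bounding the empirical Hessian only on a small ball around $\theta^\star$, as you do, is the correct move. The paper's claim that $\nabla_{\theta\theta}L_n(\cdot;\hat\eta)\succeq\lambda_0 I_d$ on all of $B_0$ does not follow from Eq.~\eqref{eq:hess-at-hat}. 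Local quadratic growth in (C4) does not make the population Hessian positive definite throughout $B_0$.
\end{itemize}
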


\begin{proof}
Write $B_\delta:=B(\theta^\star,\delta)$ and set $B_0:=B(\theta^\star,\delta_0)$.

\medskip\noindent
\textbf{Step 1: Uniform LLNs for the cross-fitted risk and its Hessian on $B_0$.}
For $j\in\{0,2,3\}$, define $\varphi_j(O;\theta,\eta):=\nabla_\theta^j \ell(O;\theta,\eta)$ (with $\varphi_0=\ell$).
Fix a fold $k$ and let $\mathcal{T}_k$ be the $\sigma$-field generated by the training data $\{O_i:i\in I_k^c\}$
(and any algorithmic randomness used to compute $\hat\eta^{(-k)}$). By cross-fitting, conditional on $\mathcal{T}_k$
the nuisance $\hat\eta^{(-k)}$ is fixed and the validation observations $\{O_i:i\in I_k\}$ are i.i.d. and independent of $\mathcal{T}_k$.
Using the envelope bounds in \textbf{(C2)} and a standard (conditional) uniform law of large numbers on the compact set
$\overline{B_0}$, we obtain, for each $j\in\{0,2,3\}$,
\begin{equation}\label{eq:fold-ulln}
\sup_{\theta\in B_0}\left\|
\frac{1}{|I_k|}\sum_{i\in I_k}\varphi_j(O_i;\theta,\hat\eta^{(-k)})
-\mathbb{E}_{\P}\big[\varphi_j(O;\theta,\hat\eta^{(-k)})\big]
\right\|
\ =\ o_{\P}(1).
\end{equation}
By \textbf{(C3)}--\textbf{(C5)} and dominated convergence (using the envelopes), uniformly over $\theta\in B_0$, we have
\begin{equation}
\sup_{\theta\in B_0}\left\|
\mathbb{E}_{\P}\big[\varphi_j(O;\theta,\hat\eta^{(-k)})\big]
-\mathbb{E}_{\P}\big[\varphi_j(O;\theta,\eta_0)\big]
\right\|
\ =\ o_{\P}(1),
\qquad j\in\{0,2,3\}.
\end{equation}
Combining this with Eq.~\eqref{eq:fold-ulln} and summing over folds (noting $K$ is fixed and $|I_k|/n\to\alpha_k$),
yields the uniform convergences on $B_0$:
\begin{align}
\sup_{\theta\in B_0}\big|L_n(\theta;\hat\eta)-L(\theta;\eta_0)\big|
&=\ o_{\P}(1),\label{eq:ulln-risk}\\
\sup_{\theta\in B_0}\big\|\nabla_{\theta\theta}L_n(\theta;\hat\eta)-\nabla_{\theta\theta}L(\theta;\eta_0)\big\|
&=\ o_{\P}(1),\label{eq:ulln-hess}\\
\sup_{\theta\in B_0}\big\|\nabla_{\theta\theta\theta}L_n(\theta;\hat\eta)\big\|
&=\ O_{\P}(1).\label{eq:third-deriv}
\end{align}

\medskip\noindent
\textbf{Step 2: Consistency and Hessian stability at $\hat\theta_{\mathrm fe}$.}
By Eq.~\eqref{eq:ulln-risk} and the identification/separation in \textbf{(C4)}, the arg{\;}min theorem implies
$\hat\theta_{\mathrm fe}\to\theta^\star$ in $\P$-probability and in particular
$\mathbb{P}(\hat\theta_{\mathrm fe}\in B_0)\to 1$.
Moreover, by Eq.~\eqref{eq:ulln-hess} and continuity of $\theta\mapsto \nabla_{\theta\theta}L(\theta;\eta_0)$,
\begin{equation}\label{eq:hess-at-hat}
\nabla_{\theta\theta}L_n(\hat\theta_{\mathrm fe};\hat\eta)\ =\ V_0 + o_{\P}(1).
\end{equation}
Since $V_0\succ 0$, Eq.~\eqref{eq:hess-at-hat} also implies that, with probability $\to 1$, there exists $\lambda_0>0$ such that
$\nabla_{\theta\theta}L_n(\theta;\hat\eta)\succeq \lambda_0 I_d$ for all $\theta\in B_0$.

\medskip\noindent
\textbf{Step 3: Local quadratic expansion around $\hat\theta_{\mathrm fe}$.}
Define the local parameter $h:=\sqrt{n}(\theta-\hat\theta_{\mathrm fe})$ and write $\theta(h):=\hat\theta_{\mathrm fe}+h/\sqrt{n}$.
Let $Q_n$ denote the distribution of $h$ when $\theta\sim q_{n,fe}(\cdot\mid D_n)$, and let
$\Phi:=N(0,(\omega V_0)^{-1})$. Since total variation is invariant under bijective measurable transformations,
\begin{equation}
TV\!\left(q_{n,fe}(\cdot\mid D_n),\ N(\hat\theta_{\mathrm fe},(n\omega V_0)^{-1})\right)
\ =\ TV(Q_n,\Phi).
\end{equation}
The (unnormalized) density of $Q_n$ is proportional to
\begin{equation}\label{eq:local-unnormalized}
\tilde q_n(h)\ :=\ \pi(\theta(h))\exp\Big\{-\omega n\big(L_n(\theta(h);\hat\eta)-L_n(\hat\theta_{\mathrm fe};\hat\eta)\big)\Big\}.
\end{equation}
Because $\hat\theta_{\mathrm fe}$ minimizes $L_n(\cdot;\hat\eta)$, we have $\nabla_\theta L_n(\hat\theta_{\mathrm fe};\hat\eta)=0$.
By Taylor's theorem, for each $h$, there exists $t=t(h)\in(0,1)$ such that
\begin{equation}\label{eq:taylor-at-hat}
n\big(L_n(\theta(h);\hat\eta)-L_n(\hat\theta_{\mathrm fe};\hat\eta)\big)
\ =\ \frac{1}{2}h^\top \nabla_{\theta\theta}L_n\big(\hat\theta_{\mathrm fe}+t h/\sqrt{n};\hat\eta\big)h.
\end{equation}
Fix $M<\infty$. On the event $\{\hat\theta_{\mathrm fe}\in B_0\}$, for all sufficiently large $n$, we have
$\hat\theta_{\mathrm fe}+t h/\sqrt{n}\in B_0$ uniformly over $\|h\|\le M$.
Thus by Eq.~\eqref{eq:ulln-hess}--\eqref{eq:hess-at-hat},
\begin{equation}\label{eq:quad-approx}
\sup_{\|h\|\le M}\left|
n\big(L_n(\theta(h);\hat\eta)-L_n(\hat\theta_{\mathrm fe};\hat\eta)\big)
-\frac{1}{2}h^\top V_0 h
\right|
\ =\ o_{\P}(1).
\end{equation}
Moreover, by \textbf{(C1)} and $\hat\theta_{\mathrm fe}\to\theta^\star$, for each fixed $M$,
\begin{equation}\label{eq:prior-ratio}
\sup_{\|h\|\le M}\left|\frac{\pi(\theta(h))}{\pi(\hat\theta_{\mathrm fe})}-1\right|
\ =\ o_{\P}(1).
\end{equation}

\medskip\noindent
\textbf{Step 4: Tail control in local coordinates.}
Let $B_M^h:=\{h:\|h\|\le M\}$. Using Eq.~\eqref{eq:taylor-at-hat} and the uniform lower bound on the Hessian on $B_0$,
on $\{\hat\theta_{\mathrm fe}\in B_0\}$ and for all $\|h\|\le \delta_0\sqrt{n}$,
\begin{equation}
n\big(L_n(\theta(h);\hat\eta)-L_n(\hat\theta_{\mathrm fe};\hat\eta)\big)\ \ge\ \frac{\lambda_0}{2}\|h\|^2.
\end{equation}
Thus, for large $n$,
\begin{equation}
\tilde q_n(h)\ \le\ \sup_{\theta\in B_0}\pi(\theta)\ \exp\!\left(-\omega\frac{\lambda_0}{2}\|h\|^2\right)
\qquad \text{for all }\|h\|\le \delta_0\sqrt{n}.
\end{equation}
For $\|h\|>\delta_0\sqrt{n}$, we have $\theta(h)\notin B(\theta^\star,\delta_0)$ for large $n$ (since $\hat\theta_{\mathrm fe}\to\theta^\star$),
and, by \textbf{(C4)} and Eq.~\eqref{eq:ulln-risk}, with probability $\to 1$,
\begin{equation}
L_n(\theta(h);\hat\eta)-L_n(\hat\theta_{\mathrm fe};\hat\eta)\ \ge\ \frac{c_{\mathrm{sep}}}{2},
\end{equation}
so $\tilde q_n(h)\le \sup_{\theta\in\Theta}\pi(\theta)\exp(-\omega n c_{\mathrm{sep}}/2)$ there.
Consequently, for any $\varepsilon>0$, there exists $M<\infty$ such that
\begin{equation}\label{eq:tails}
\limsup_{n\to\infty}\ \mathbb{P}\!\left(Q_n\big((B_M^h)^c\big)>\varepsilon\right)\ =\ 0,
\qquad
\Phi\big((B_M^h)^c\big)\ <\ \varepsilon.
\end{equation}

\medskip\noindent
\textbf{Step 5: Conclude $TV(Q_n,\Phi)\to 0$.}
Let $\phi(h)$ denote the density of $\Phi$, i.e.
$\phi(h)\propto \exp\{-\frac{\omega}{2}h^\top V_0 h\}$.
By Eq.~\eqref{eq:quad-approx}--\eqref{eq:prior-ratio}, on $B_M^h$, we have the uniform approximation
\begin{equation}
\sup_{\|h\|\le M}\left|\frac{\tilde q_n(h)}{\pi(\hat\theta_{\mathrm fe})\,\phi(h)}-1\right|\ =\ o_{\P}(1).
\end{equation}
Combining this local uniform approximation with the tail control from Eq.~\eqref{eq:tails} yields that the normalized density
$f_n(h):=\tilde q_n(h)/\int \tilde q_n$ satisfies
\begin{equation}
\int_{\mathbb{R}^d}\big|f_n(h)-\phi(h)\big|\,dh\ =\ o_{\P}(1),
\end{equation}
hence $TV(Q_n,\Phi)=\frac12\int|f_n-\phi|=o_{\P}(1)$.
Therefore,
\begin{equation}
TV\!\left(q_{n,fe}(\cdot\mid D_n),\ N(\hat\theta_{\mathrm fe},(n\omega V_0)^{-1})\right)
\ =\ o_{\P}(1),
\end{equation}
as claimed.
\end{proof}

\newpage

\section{Additional Experiments}
\label{app:other-experiments}

\begin{figure}[!ht]
\centering
\includegraphics[width = 1\columnwidth]{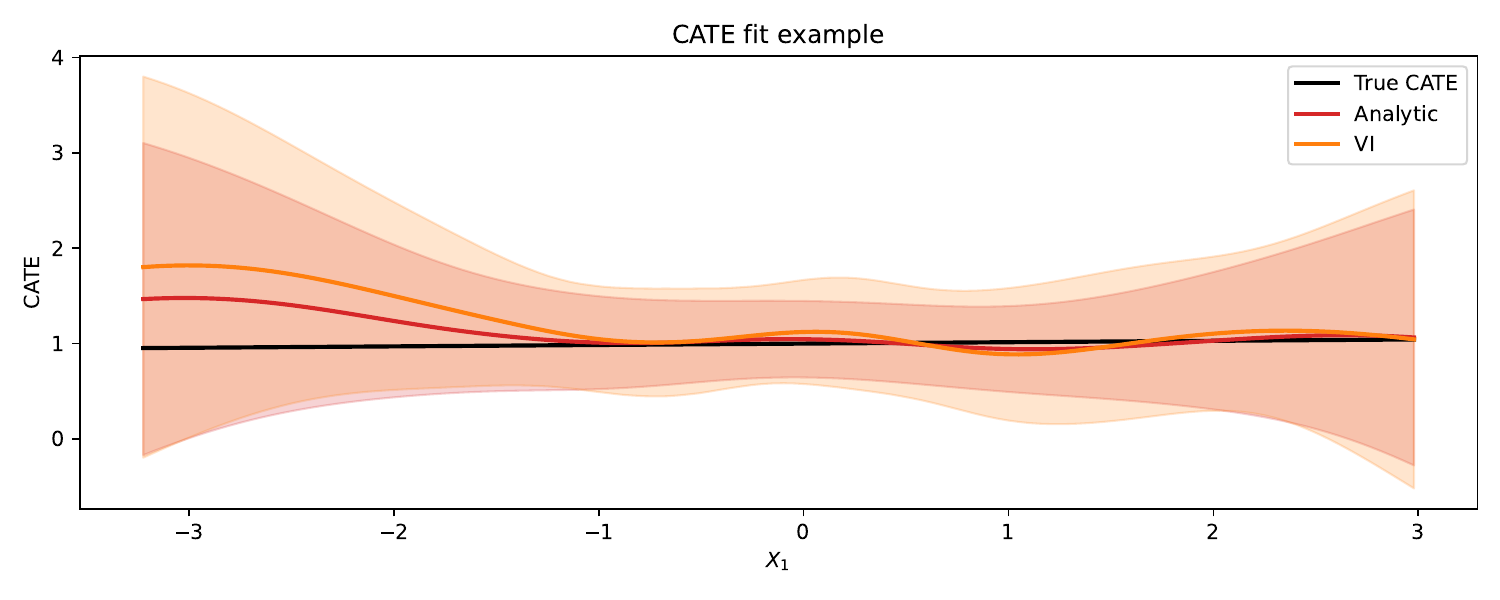}
\caption{
\scriptsize{
\textbf{(back-door CATE)} An example of a Gaussian Process fit (mean and $95\%$ CrI) of the generalized posterior of CATE, with the $\mathrm{DR}$ loss, in $\mathcal{D}_4$, at $n=1000$. The solution fitted with variational inference (VI) uses an Inducing Point GP. For implementation details, see Appendix~\ref{app:implementation}.
}
}
\label{fig:cate-example}
\end{figure}

In Figure~\ref{fig:cate-example} we show an example of a fit of the generalized posterior for CATE. We report the coverage and length of credible interval for the back-door CATE setting in Tables~\ref{table:cate-coverage-analytic} and~\ref{table:cate-length-analytic}. Note that the uncertainty around the \textit{function} CATE is not a single interval, but rather a band around the mean fit. We report coverage and length metrics by averaging the point-wise uncertainty at $K=100$ randomly sampled covariates $X$ sampled from the underlying data-generating process.

Note that the CATE results are heavily influenced by the choice of kernel, which we have not optimized (for our implementation details, see Appendix~\ref{app:implementation}). Our aim here is to demonstrate the applicability of our approach to different settings and to show it can achieve frequentist-calibrated uncertainty coverage.

\begin{table}[!h]
\centering
\resizebox{\textwidth}{!}{%
\begin{tabular}{lcccccccccc}
\hline
Strategy & Orthogonal & $\mathcal{D}_{1}$ & $\mathcal{D}_{2}$ & $\mathcal{D}_{3}$ & $\mathcal{D}_{4}$ & $\mathcal{D}_{5}$ & $\mathcal{D}_{6}$ & $\mathcal{D}_{7}$ & $\mathcal{D}_{8}$ & $\mathcal{D}_{9}$ \\ 
\hline
RA & \xmark & $\textbf{1.000}$ \scriptsize (0.964-1.000) & $1.000$ \scriptsize (0.964-1.000) & $\textbf{1.000}$ \scriptsize (0.964-1.000) & $\textbf{1.000}$ \scriptsize (0.964-1.000) & $\textbf{1.000}$ \scriptsize (0.964-1.000) & $0.500$ \scriptsize (0.398-0.602) & $0.760$ \scriptsize (0.664-0.840) & $0.420$ \scriptsize (0.322-0.523) & $0.000$ \scriptsize (0.000-0.036) \\ 
IPW & \xmark & $\textbf{1.000}$ \scriptsize (0.964-1.000) & $0.900$ \scriptsize (0.824-0.951) & $\textbf{1.000}$ \scriptsize (0.964-1.000) & $0.820$ \scriptsize (0.731-0.890) & $0.890$ \scriptsize (0.812-0.944) & $\textbf{1.000}$ \scriptsize (0.964-1.000) & $\textbf{0.790}$ \scriptsize (0.697-0.865) & $\textbf{1.000}$ \scriptsize (0.964-1.000) & $\textbf{0.860}$ \scriptsize (0.776-0.921) \\ 
DR & \cmark & $\textbf{1.000}$ \scriptsize (0.964-1.000) & $\textbf{0.990}$ \scriptsize (0.946-1.000) & $\textbf{1.000}$ \scriptsize (0.964-1.000) & $\textbf{1.000}$ \scriptsize (0.964-1.000) & $\textbf{1.000}$ \scriptsize (0.964-1.000) & $0.800$ \scriptsize (0.708-0.873) & $0.770$ \scriptsize (0.675-0.848) & $\textbf{1.000}$ \scriptsize (0.964-1.000) & $0.540$ \scriptsize (0.437-0.640) \\ 
\hline
\end{tabular}
 
}

\caption{
\scriptsize{
\textbf{(back-door CATE)} The $95\%$ $\mathrm{CrI}^{(r)}_{0.95}$ credible interval's coverage $\hat C$ (and its $95\%$ confidence interval) for each strategy $S \in \{\mathrm{DR,IPW,RA}\}$ across $9$ synthetic datasets $\{\mathcal{D}_1,\ldots,\mathcal{D}_9\}$. Results are computed from $R=50$ repetitions at sample size $n=1000$. Closest to exact frequentist calibration at $95\%$ is in \textbf{bold} We report coverage metrics by averaging the point-wise uncertainty at $K=100$ randomly sampled covariates $X$ sampled from the underlying data-generating process. .
}
}
\label{table:cate-coverage-analytic}
\end{table}

\begin{table}[!h]
\centering
\resizebox{\textwidth}{!}{%
\begin{tabular}{lcccccccccc}
\hline
Strategy & Orthogonal & $\mathcal{D}_{1}$ & $\mathcal{D}_{2}$ & $\mathcal{D}_{3}$ & $\mathcal{D}_{4}$ & $\mathcal{D}_{5}$ & $\mathcal{D}_{6}$ & $\mathcal{D}_{7}$ & $\mathcal{D}_{8}$ & $\mathcal{D}_{9}$ \\ 
\hline
RA & \xmark & $\textbf{1.292 (0.000)}$ & $\textbf{0.939 (0.000)}$ & $\textbf{0.801 (0.000)}$ & $1.606 (0.000)$ & $1.490 (0.000)$ & $\textcolor{gray}{\cancel{1.505 (0.000)}}$ & $\textcolor{gray}{\cancel{0.985 (0.000)}}$ & $\textcolor{gray}{\cancel{3.563 (0.000)}}$ & $\textcolor{gray}{\cancel{0.436 (0.000)}}$ \\ 
IPW & \xmark & $2.122 (0.000)$ & $1.965 (0.000)$ & $2.318 (0.000)$ & $\textcolor{gray}{\cancel{3.718 (0.000)}}$ & $\textcolor{gray}{\cancel{1.971 (0.000)}}$ & $\textbf{3.417 (0.000)}$ & $\textcolor{gray}{\cancel{2.361 (0.000)}}$ & $\textbf{5.484 (0.000)}$ & $\textcolor{gray}{\cancel{3.016 (0.000)}}$ \\ 
DR & \cmark & $1.378 (0.000)$ & $1.153 (0.000)$ & $1.524 (0.000)$ & $\textbf{1.554 (0.000)}$ & $\textbf{1.321 (0.000)}$ & $\textcolor{gray}{\cancel{1.435 (0.000)}}$ & $\textcolor{gray}{\cancel{2.308 (0.000)}}$ & $5.556 (0.000)$ & $\textcolor{gray}{\cancel{0.932 (0.000)}}$ \\ 
\hline
\end{tabular}
 
}
\caption{
\scriptsize{
\textbf{(back-door CATE)} The average length (and standard deviation) of the $95\%$ $\mathrm{CrI}^{(r)}_{0.95}$ credible interval across $R=50$ repetitions for each strategy $S \in \{\mathrm{DR,IPW,RA}\}$ across $9$ synthetic datasets $\{\mathcal{D}_1,\ldots,\mathcal{D}_9\}$. Results are computed at sample size $n=1000$. The lengths of unfaithful credible intervals are \textcolor{gray}{striked out in gray}. The narrowest $\mathrm{CrI}^{(r)}_{0.95}$ is length is highlighted in \textbf{bold}. We report length metrics by averaging the point-wise uncertainty at $K=100$ randomly sampled covariates $X$ sampled from the underlying data-generating process. 
}
}
\label{table:cate-length-analytic}
\end{table}

\newpage

\FloatBarrier
\section{Experiment Details}
\label{app:implementation}

\subsection{Implementation Details}
The code for all experiments is available at \url{https://github.com/EmilJavurek/Generalised-Bayes-for-Causal-Inference}.  Here, we provide a short description of the essential details.

\textbf{Nuisances:}
For backdoor experiments, the default nuisance estimator fits a logistic regression for the propensity and two ridge regressions for outcomes (T-learner). All models use a shared, fixed feature map
$\Phi(X) = [X, X^2, \sin(X), x_{\mathrm{extra}}, 1]$, where $x_{\mathrm{extra}}=X_1X_2$ if $d\ge 2$ (or $X_1^3$ if $d=1$).
Propensities are regularized with $\lambda_{\text{prop}}$ and clipped to $[\epsilon,1-\epsilon]$ to stabilize IPW/AIPW.
We use cross-fitting: Nuisances are fit on each training fold and pseudo-outcomes are evaluated on held-out folds.

\textbf{Prior:}
For ATE, the prior is $\mathcal{N}(\mu_0,\sigma_0^2)$, with default $\mu_0=0$ and $\sigma_0=1$.
For CATE, the prior is a Gaussian process with the kernel specified in the configuration - we mainly use RBF or Matern kernels.

\textbf{Variational family:}
For ATE, the variational family is a univariate Gaussian with parameters $(\mu,\log\sigma)$; optimizing in $\log\sigma$ enforces positivity of $\sigma$ and yields stable optimization. When optimization is stopped early, the learned posterior variance can be slightly conservative relative to the analytic closed form---this is preferred over an overly confident fitting procedure.
For CATE, the variational family is a sparse inducing-point GP with $M$ inducing locations initialized from the training covariates. In the example CATE configuration, we use $M=20$ and a Matern-$5/2$ kernel with lengthscale 2.0, variance 2.0, and jitter $10^{-4}$.
A constant mean function is fit to the pseudo-outcomes and subtracted before GP fitting, and then added back to predictions.

\textbf{Variational inference (VI):}
The variational inference fit to the generalized posterior is fit with the Adam optimizer. For ATE, we use a batch size 200 (number of $\theta$ samples per update), learning rate 0.03, and 2000 epochs. For CATE, the batch size field is ignored because the GP loss is evaluated in full batch.

\subsection{Data-Generating Processes}
\subsubsection{Back-door adjustment setting}
\label{app:dgp-backdoor-description}

\begin{enumerate}

\item \emph{$\mathcal{D}_1$ (Linear, confounded, homoskedastic).}
\begin{align}
X &\sim \mathcal{N}(0, I_2), \nonumber\\
e(X) &= \sigma(X^\top \beta), \qquad \sigma(u)=\frac{1}{1+e^{-u}}, \nonumber\\
A \mid X &\sim \mathrm{Bernoulli}\!\bigl(e(X)\bigr), \nonumber\\
Y \mid (A,X) &= \tau\,A + X^\top \gamma + \varepsilon, \qquad \varepsilon\sim \mathcal{N}(0,1). \nonumber
\end{align}
The CATE is $\tau(X)=\tau$ and the ATE is $\mathrm{ATE}=\tau$.

\item \emph{$\mathcal{D}_2$ (Linear CATE heterogeneity, mean-zero covariates).}
\begin{align}
X &\sim \mathcal{N}(0, I_2), \nonumber\\
e(X) &= \sigma(X^\top \beta), \nonumber\\
A \mid X &\sim \mathrm{Bernoulli}\!\bigl(e(X)\bigr), \nonumber\\
Y \mid (A,X) &= \bigl(\theta_0 + X^\top \theta\bigr)A + X^\top \gamma + \varepsilon, \qquad \varepsilon\sim \mathcal{N}(0,1). \nonumber
\end{align}
The CATE is $\tau(X)=\theta_0 + X^\top\theta$ and the ATE is $\mathrm{ATE}=\theta_0$.

\item \emph{$\mathcal{D}_3$ (Linear CATE heterogeneity, nonzero mean covariates).}
\begin{align}
X &\sim \mathcal{N}(\mu, I_2), \nonumber\\
e(X) &= \sigma(X^\top \beta), \nonumber\\
A \mid X &\sim \mathrm{Bernoulli}\!\bigl(e(X)\bigr), \nonumber\\
Y \mid (A,X) &= \bigl(\theta_0 + X^\top \theta\bigr)A + X^\top \gamma + \varepsilon, \qquad \varepsilon\sim \mathcal{N}(0,1). \nonumber
\end{align}
The CATE is $\tau(X)=\theta_0 + X^\top\theta$ and the ATE is $\mathrm{ATE}=\theta_0 + \theta^\top\mu$.

\item \emph{$\mathcal{D}_4$ (Nonlinear outcome; quadratic and interaction CATE).}
\begin{align}
X &\sim \mathcal{N}(0, I_2), \nonumber\\
e(X) &= \sigma(X^\top \beta), \nonumber\\
A \mid X &\sim \mathrm{Bernoulli}\!\bigl(e(X)\bigr), \nonumber\\
f(X) &= X_1^2 + \sin(X_2), \nonumber\\
Y \mid (A,X) &= \bigl(\alpha_0 + \alpha_1 X_1X_2\bigr)A + f(X) + \varepsilon, \qquad \varepsilon\sim \mathcal{N}(0,1). \nonumber
\end{align}
The CATE is $\tau(X)=\alpha_0 + \alpha_1 X_1X_2$ and the ATE is $\mathrm{ATE}=\alpha_0$.

\item \emph{$\mathcal{D}_5$ (Nonlinear propensity; linear outcome).}
\begin{align}
X &\sim \mathcal{N}(0, I_2), \nonumber\\
z(X) &= b_0 + b_1 X_1 + b_2 X_1^2 + b_3 \sin(X_2), \nonumber\\
e(X) &= \sigma\!\bigl(z(X)\bigr), \nonumber\\
A \mid X &\sim \mathrm{Bernoulli}\!\bigl(e(X)\bigr), \nonumber\\
h(X) &= X_1 + \tfrac12 X_1^2 + \tfrac12 \sin(X_2), \nonumber\\
Y \mid (A,X) &= \tau\,A + h(X) + \varepsilon, \qquad \varepsilon\sim \mathcal{N}(0,1). \nonumber
\end{align}
The CATE is $\tau(X)=\tau$ and the ATE is $\mathrm{ATE}=\tau$.

\item \emph{$\mathcal{D}_6$ (Limited overlap).}
\begin{align}
X &\sim \mathcal{N}(0, I_2), \nonumber\\
e(X) &= \sigma(3.5 + 3X_1), \nonumber\\
A \mid X &\sim \mathrm{Bernoulli}\!\bigl(e(X)\bigr), \nonumber\\
Y \mid (A,X) &= \tau\,A + X^\top \gamma + \varepsilon, \qquad \varepsilon\sim \mathcal{N}(0,1). \nonumber
\end{align}
The CATE is $\tau(X)=\tau$ and the ATE is $\mathrm{ATE}=\tau$.

\item \emph{$\mathcal{D}_7$ (Heteroskedastic heavy-tailed noise).}
\begin{align}
X &\sim \mathcal{N}(0, I_2), \nonumber\\
e(X) &= \sigma(X^\top \beta), \nonumber\\
A \mid X &\sim \mathrm{Bernoulli}\!\bigl(e(X)\bigr), \nonumber\\
\sigma(X) &= \exp\!\bigl(0.5 X_1\bigr), \nonumber\\
\eta &\sim t_{\nu}, \nonumber\\
Y \mid (A,X) &= \tau\,A + X^\top \gamma + \sigma(X)\,\eta. \nonumber
\end{align}
The CATE is $\tau(X)=\tau$ and the ATE is $\mathrm{ATE}=\tau$.

\item \emph{$\mathcal{D}_8$ (High-dimensional sparse confounding).}
\begin{align}
X &\sim \mathcal{N}(0, I_p), \nonumber\\
\beta_j &= 0,\ \gamma_j=0 \quad \text{for } j>s, \nonumber\\
\beta_{1:s} &= \mathrm{linspace}(0.2,1.0,s), \qquad \gamma_{1:s}=\mathrm{linspace}(1.0,0.2,s), \nonumber\\
e(X) &= \sigma(X^\top \beta), \nonumber\\
A \mid X &\sim \mathrm{Bernoulli}\!\bigl(e(X)\bigr), \nonumber\\
Y \mid (A,X) &= \tau\,A + X^\top \gamma + \varepsilon, \qquad \varepsilon\sim \mathcal{N}(0,1). \nonumber
\end{align}
The CATE is $\tau(X)=\tau$ and the ATE is $\mathrm{ATE}=\tau$.

\item \emph{$\mathcal{D}_9$ (Friedman-style nonlinear CATE).}
\begin{align}
X &\sim \mathrm{Unif}\!\bigl([0,1]^5\bigr), \nonumber\\
e(X) &= \sigma\!\left(-0.5 + X_1 - 0.25X_2 + 0.25X_3\right), \nonumber\\
A \mid X &\sim \mathrm{Bernoulli}\!\bigl(e(X)\bigr), \nonumber\\
\mu(X) &= 10\sin(\pi X_1X_2)+20(X_3-0.5)^2+10X_4+5X_5, \nonumber\\
\tau(X) &= 1+\frac{X_1}{X_2+0.1}, \nonumber\\
Y \mid (A,X) &= \mu(X) + \tau(X)A + \varepsilon, \qquad \varepsilon\sim \mathcal{N}(0,1). \nonumber
\end{align}
The CATE is $\tau(X)=1+\frac{X_1}{X_2+0.1}$ and the ATE is $\mathrm{ATE}=1+\tfrac12\log(11)$.

\end{enumerate}


\end{document}